%% file: main.tex
\documentclass[10pt]{article}

\usepackage[preprint]{tmlr}

\input{preamble}

\usepackage{hyperref}
\usepackage{url}

\title{HiAP: A Multi-Granular Stochastic Auto-Pruning Framework for Vision Transformers}

\author{\name Andy Li \\
      \addr School of Natural and Computing Sciences, University of Aberdeen, UK
      \AND
      \name Aiden Durrant \\
      \addr 
      School of Computing Sciences, University of East Anglia, UK
      \AND
      \name Milan Markovic \\
      \addr School of Natural and Computing Sciences, University of Aberdeen, UK
      \AND
      \name Georgios Leontidis \\
      \addr 
      Department of Physics and Technology, UiT The Arctic University of Norway, Norway}

\def\month{MM}
\def\year{YYYY}
\def\openreview{\url{https://openreview.net/forum?id=XXXX}}

\begin{document}
\maketitle

\input{sec/0_abstract}

\input{sec/1_paper}

\bibliography{main}
\bibliographystyle{tmlr}

\input{supp}

\end{document}

%% file: preamble.tex
\usepackage[dvipsnames]{xcolor}
\usepackage{graphicx}
\usepackage{amsmath}
\usepackage{amssymb}
\usepackage{booktabs}
\usepackage{caption}
\usepackage{subcaption}

\usepackage[accsupp]{axessibility}
\usepackage{algorithm}
\usepackage{algpseudocode}
\usepackage{amsthm}
\usepackage{colortbl}

\newtheorem{lemma}{Lemma}
\newtheorem{proposition}{Proposition}



%% file: sec/0_abstract.tex
\begin{abstract}
Vision Transformers require significant computational resources and memory bandwidth, severely limiting their deployment on resource-constraint hardware. Most structured pruning methods reduce theoretical cost effectively, yet they typically operate at a single structural granularity and depend on multi-stage pipelines with importance ranking, auxiliary solvers or post-hoc magnitude thresholding, followed by a separate fine-tuning phase to recover accuracy. We propose Hierarchical Auto-Pruning (HiAP), which casts ViT pruning as a single budget-aware learning problem and jointly allocates sparsity across four granularities in one end-to-end phase. 
HiAP introduces stochastic Gumbel-Sigmoid gates at macro level (attention heads and FFN blocks) and micro level (intra-head dimensions and FFN neurons), and trains them against the task loss together with an analytical MAC cost term. 
The budget coefficient steers the network to a target compute level while the gates gradually harden into a dense, smaller sub-network at convergence. It does not require importance heuristics, ranking metrics, auxiliary solvers or secondary fine-tuning. 
On ImageNet, HiAP compresses DeiT-Base to 7.4G MACs at 80.88\% top-1 and DeiT-Small to 3.1G at 79.33\%, competitive with substantially more complex pipelines at matched compute. The structurally pruned network can be accelerated natively on stock kernels, and more than 90\% of the theoretical MAC reduction is realized as measured throughput on an A100.
\end{abstract}

%% file: sec/1_paper.tex
\section{Introduction}
Vision Transformers (ViTs) \citep{dosovitskiy2020image} are a dominant architecture in computer vision, but their high computation and memory costs make them difficult to deploy on resource-constrained devices. Model pruning resolves this by removing redundant parts of a network, reducing its parameters and FLOPs. While unstructured pruning (removing individual weights) offers the highest flexibility and theoretical compression ratio, the resulting irregular sparsity patterns require specialized hardware to achieve an actual acceleration. Structured pruning, which removes entire components such as attention heads or feed-forward neurons, has become the preferred method, as it results in a smaller, dense sub-network that natively accelerates on standard hardware.

Despite recent advances in structured pruning for ViTs, many methods still face two recurring limitations. First, single-granularity pruning leaves one bottleneck unaddressed. Methods that prune only micro-structures (e.g., intra-head dimensions) \citep{chavan2022vision} reduce FLOPs but preserve the network's overall depth and the number of attention heads. 
Yet latency and energy on modern hardware are often dominated not by compute but by memory bandwidth: transferring data between DRAM and SRAM can cost orders of magnitude more energy than the corresponding computation \citep{han2015deep}. ViTs are particularly exposed to this cost since preserving every layer and materializing each $N \times N$ attention map requires repeated weight loads and large intermediate transfers \citep{dao2022flashattention}.
Conversely, methods that exclusively prune macro structures (e.g., entire heads, blocks) \citep{liu2024updp} can remove this overhead, but often risk more performance degradation as they lose the network's representation capacity to a greater extent. Addressing both bottlenecks require pruning macro and micro structures jointly. Second, multi-stage pipelines are complex and hard to reproduce. Many strong methods decompose pruning into separate searching, importance ranking, post-hoc thresholding, and retraining stages \citep{yin2023gohsp,zheng2022savit,chavan2022vision},
each adding its own design choices, hyperparameters, and auxiliary machinery (graph evaluations, solvers, scheduled mask updates etc). These stack mechanisms for incremental gains and are difficult to implement, extend, and reason about. Table~\ref{tab:comparison} summarizes which granularities recent methods
target and what machinery each requires to enforce a budget.

We propose Hierarchical Auto-Pruning (HiAP), which casts pruning as a single-shot, \emph{budget-aware} learning problem solved in one end-to-end phase. Rather than hand-designing per-layer sparsity ratios or running a separate search-and-retrain pipeline, we set two penalty weights on the expected compute; HiAP then learns the model weights together with a per-layer, per-granularity sparsity allocation that meets it. Concretely, we add a differentiable analytical MAC cost term and structural feasibility penalties, which prevent layer collapse to the task loss, and relax the discrete keep/prune decisions with Gumbel-Sigmoid gates so gradients flow end-to-end. Gates act at two levels - macro (entire attention heads and FFN blocks) and micro (intra-head dimensions and FFN neurons), and harden into a dense, smaller sub-network as the temperature anneals, with no importance heuristics, ranking metrics, auxiliary solvers, post-hoc thresholding, or secondary fine-tuning. Because the cost model attributes hardware penalties to individual structures, the sparsity allocation is \emph{heterogeneous}, and differs across models. On DeiT-Base the network gives up attention heads while keeping every FFN block, and on DeiT-Small it does the reverse (Sec.~\ref{subsec:structure_analysis}). Figure \ref{fig:hiap_overview} illustrates the gating mechanism and how differentiable cost flows back to the gate logits.

\begin{figure}[t]
\centering
\includegraphics[width=\linewidth,height=0.72\textheight,keepaspectratio]{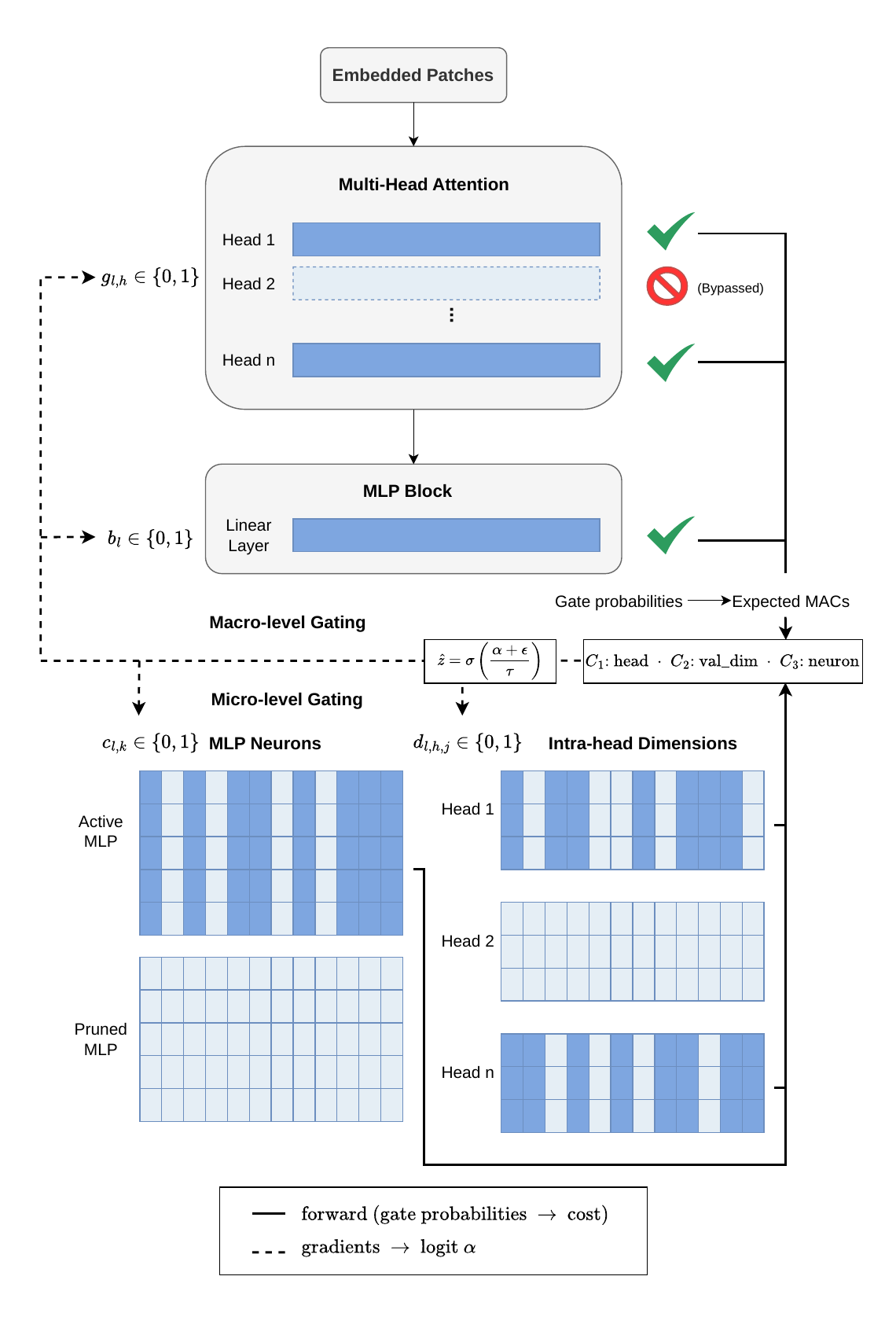}
\caption{Overview of the Hierarchical Auto-Pruning (HiAP) framework applied to a standard Vision Transformer block. The architecture's topology is governed by learnable Gumbel-Sigmoid gates operating at two distinct granularities. Macro-gates (Block and Head logits) evaluate whether to retain or bypass entire MLP modules and attention heads. Concurrently, micro-gates (Neuron and Dimension logits) prune fine-grained structures within the surviving active structures. This dual-level formulation allows the network to autonomously carve out an compact, hardware-efficient sub-network during a single end-to-end training phase.}
\label{fig:hiap_overview}
\end{figure}

Our contributions are:
\begin{itemize}

\item We unify macro-level (heads, FFN blocks) and micro-level (intra-head dimensions, FFN neurons) structured pruning into a single differentiable phase, in which weights and architecture co-adapt without a separate search or retraining stage.

\item We decompose MACs into the specific computational costs of individual structures, creating a single objective that distributes sparsity across all granularities without manual per-layer ratios or offline solvers. Because this cost model explicitly penalizes the overhead of an active structure, the network naturally learns the correct architectural moves such as deleting entire attention heads outright instead of keeping them open with tiny dimensions.

\item We show that the allocation HiAP discovers depends on model scale. At an identical penalty ratio, DeiT-Base retains every FFN block and compresses attention, while DeiT-Small closes whole FFN blocks and keeps proportionally more heads. Methods that fix per-layer ratios or solve for an allocation offline cannot achieve this, since for them the allocation is an input rather than a result.

\item On ImageNet-1K, HiAP matches or exceeds pruning pipelines of substantially greater complexity at comparable compute on both DeiT-Base and DeiT-Small. Because the discovered sub-networks are physically extracted rather than masked, they realize more than 90\% of their theoretical MAC reduction as measured throughput on an A100, against 62--78\% for the strongest competing method.
\end{itemize}

\section{Related Work}
\begin{table*}[t]
\centering
\caption{\textbf{Comparison of recent structured pruning frameworks for ViTs.} Existing methods target distinct subsets of the architecture, such as focusing on depth or a mix of attention heads and internal dimensions. HiAP unifies all architectural granularities to simultaneously optimize memory bandwidth (blocks/heads) and compute FLOPs (micro-structures).}
\label{tab:comparison}
\resizebox{\textwidth}{!}{%
\begin{tabular}{l l l l}
\toprule
\textbf{Method} & \textbf{Macro (Depth \& Heads)} & \textbf{Micro (Intra-Head \& FFN)} & \textbf{Search \& Budget Enforcement} \\
\midrule
ViT-Slim \citep{chavan2022vision} & None & Intra-Head Dims, FFN Neurons & $\ell_1$ Sparsity + Rank Thresholding \\
SAViT \citep{zheng2022savit} & Attention Heads & FFN Neurons, Embedding & Taylor Joint Optimization + EA \\
GOHSP \citep{yin2023gohsp} & Attention Heads & Intra-Head Dims, FFN Neurons & Graph-based Ranking + Optimization \\
NViT \citep{yang2023global} & Attention Heads & Intra-Head Dims, FFN Neurons & Latency-Aware Taylor Ranking \\
UPDP \citep{liu2024updp} & FFN Blocks & None & Genetic Algorithm \\
S$^2$ViTE \citep{chen2021chasing} & Attention Heads & FFN Neurons & Dynamic Sparse Training \\
WDPruning \citep{yu2022width} & Transformer Blocks, Heads & Intra-Head Dims, FFN Neurons & Saliency score \\
\rowcolor{gray!15} \textbf{HiAP (Ours)} & \textbf{FFN Blocks, Heads} & \textbf{Intra-Head Dims, FFN Neurons} & \textbf{End-to-End Auto-Penalty} \\
\bottomrule
\end{tabular}%
}
\end{table*}

\textbf{Structured Transformer Pruning.} 
Early structural pruning techniques established foundational heuristics like magnitude  thresholding \citep{han2015deep}, second-order approximations \citep{hassibi1992second}, and Taylor-based scoring \citep{molchanov2019importance}. As these approaches transitioned to ViT, early studies showed that showed that entire attention heads could be removed with little impact \citep{michel2019sixteen}. Recent structured pruning methods have evolved to target specific architectural granularities, with the majority focusing on compressing the network's width \citep{zhu2021vision, yu2022unified}. For instance, ViT-Slim \citep{chavan2022vision} searches for sparsity within intra-head dimensions and FFN channels. Other methods expand the search space to include entire attention heads. GOHSP \citep{yin2023gohsp} and X-Pruner \citep{yu2023x} prunes heads and intra-head/FFN columns using graph-based ranking \citep{Fang_2023_CVPR}. NViT \citep{yang2023global} and SAViT \citep{zheng2022savit} perform global structured pruning across heads and internal dimensions using Hessian-aware Taylor scoring and collaborative optimization. Recent efforts even strive to maintain isomorphic structures while reducing the transformer width \citep{fang2024isomorphic}. 

Because the true latency bottleneck on modern hardware is often High-Bandwidth Memory (HBM) access rather than pure computation \citep{dao2022flashattention}, preserving all structural layers forces the hardware to incur significant memory-bound overhead of traversing every block and materializing every $N \times N$ attention map. To alleviate this, early literature explored shallowing deep networks \citep{8485719}, discrimination-based block pruning \citep{wang2019dbp}, interpretable layer pruning \citep{tang2023sr} and fusible residual pruning \citep{xu2020layer}. For ViT, macro-level methods like UPDP \citep{liu2024updp} and Layerdrop \citep{fan2019reducing} focus on dropping entire transformer blocks bypassing attention and FFN layers all together. Multi-granular pruning approaches have emerged formulating the joint pruning of ViT blocks, heads, neurons, etc. MDP \citep{sun2025mdp} formulates the joint removal of ViT blocks, heads, and channels as a mathematical problem (MINLP) that relies on specialized offline solvers to calculate the best combination of structures to keep. 

In parallel, a separate line of work focuses on dynamic token pruning (reducing sequence length). Methods such as DynamicViT \citep{rao2021dynamicvit}, EViT \citep{liang2022not}, SPViT \citep{kong2022spvit}, A-ViT \citep{yin2022vit}, Peeling an Onion \citep{kong2023peeling} and ToMe \citep{bolya2022token} drop or merge tokens to reduce complexity. Our work strictly outputs a static, hardware-friendly sub-network, and it is entirely complementary to these input-dependent token-reduction techniques.



\textbf{Learnable Gates for Pruning.}
Learnable gates or masks provide an alternative to fixed heuristics, allowing models to learn sparsity organically. Earlier work explored conditional computation \citep{bengio2015conditional} and sparse CNNs with continuous differentiable relaxations \citep{louizos2017learning}. In transformers, SViTE \citep{chen2021chasing} incorporated gating mechanisms, while X-Pruner \citep{yu2023x} trained masks to reflect class-wise importance. HiAP extends this paradigm by deploying Gumbel-Sigmoid relaxations across a hierarchical, two-level gating scheme: macro-gates to prune entire heads and FFN blocks, and micro-gates to trim their internal widths. 

\textbf{Budget-Aware Regularization.}
For pruned models to be practically deployable, they must meet hardware constraints. Several works integrate budget constraints into the loss function or search space. ProxylessNAS \citep{cai2018proxylessnas} adds latency models to NAS, while AutoSlim \citep{yu2019autoslim} regularized channel counts. ViT pruning methods have adopted similar strategies. SPViT \citep{kong2022spvit} introduces a latency budget during the token selection process during token selection while NViT \citep{yang2023global} integrates latency-aware regularization into its structural pruning criteria. HiAP aligns with this idea and uses a loss function with the expected MACs, making the process both automatic and resource-aware.

\section{Method}
\label{sec:method}

At its core, HiAP solves a single budget-aware objective that allocates sparsity jointly across four structural granularities. The mechanism is a linear, differentiable model of the network's expected compute, with sparsity allocations optimized alongside the weights. Learnable Gumbel-Sigmoid gates make the discrete keep/prune decisions differentiable so this cost can be minimized by gradient descent. Unlike prior work, HiAP is designed to produce a compact sub-network autonomously without including an importance ranking system or manual heuristics with rules for structural removal.

\textbf{Notation.} We consider a ViT with $L$ layers, $H$ attention heads, head dimension $D_h$, and feed-forward network (FFN) hidden width $D_{\text{ffn}}$. We introduce binary gates to regulate these structures. Macro-gates, denoted as $g_{l,h} \in \{0,1\}$ and $b_l \in \{0,1\}$, control the presence of entire attention heads and FFN blocks, respectively. Micro-gates, denoted as $d_{l,h,j} \in \{0,1\}$ and $c_{l,k} \in \{0,1\}$, control the finer capacity within these active macro-structures (i.e., specific attention dimensions and FFN neurons). We refer to the complete set of gates as $\mathcal{G}$. 

\subsection{Hierarchical Gating Mechanism}
\label{subsec:gating}

We introduce two distinct levels of gating inside each Transformer block. This hierarchy is central to our method, and it allows the network to independently decide whether to drop a coarse structure entirely to save substantial computational overhead, or merely narrow its width to preserve representation. 

\textbf{Macro-Level Pruning.} The macro-gates control coarse units like entire attention heads ($g_{l,h}$) and FFN blocks ($b_l$). When $g_{l,h} = 0$, the $h$-th attention head in layer $l$ is completely bypassed. When $b_l = 0$, the entire FFN block in layer $l$ is removed.
\begin{align}
\text{AttnOut}_{l,h}(X) &= g_{l,h} \cdot \text{Attention}(X W^Q_{l,h}, X W^K_{l,h}, X W^V_{l,h}), \\
\text{FFNOut}_{l}(X) &= b_{l} \cdot \text{FFN}(X).
\end{align}

\textbf{Micro-Level Pruning.} The micro-gates operate within active macro-structures, and they dynamically prune the internal matrix dimensions. For an active attention head, the gates $d_{l,h} \in \{0,1\}^{D_h}$ prune the value-path dimensions. For an active FFN block, the gates $c_{l} \in \{0,1\}^{D_{\text{ffn}}}$ prune the intermediate hidden neurons:
\begin{align}
\text{Head}'_{l,h}(X) &= g_{l,h} \left[ \text{softmax}\!\left(\frac{Q_{l,h} K_{l,h}^\top}{\sqrt{D_h}}\right) \left( V_{l,h} \odot d_{l,h} \right) \right], \label{eq:micro_attn} \\
\text{FFN}'_{l}(X) &= b_{l} \left[ \left( \phi(X W_{1,l}) \odot c_{l}\right) W_{2,l} \right], \label{eq:micro_ffn}
\end{align}
where $\odot$ denotes channel-wise broadcasting and $\phi$ is the non-linear activation (e.g., GELU). 

During training, these binary gates are relaxed to continuous variables $\hat{z} \in (0,1)$ using the Gumbel-Sigmoid distribution, enabling end-to-end differentiable optimization. When we export the pruned model, Equation~\ref{eq:micro_ffn} corresponds to the physical removal of specific columns from $W_{1,l}$ and their matching rows from $W_{2,l}$.

\subsection{Differentiable Cost Modeling}
\label{subsec:cost_model}

To guide the architecture search, we formulate an analytical, differentiable accounting of the network's Multiply-Accumulate operations (MACs). Let $N$ be the sequence length and $D$ be the embedding dimension. The computational cost of the network is decomposed into a static overhead $C^{\text{const}}$ (e.g., patch embedding, layer normalization) and the dynamic costs governed by our gates.

We isolate the marginal MACs cost into three specific constants:
\begin{itemize}
    \item $C_1 = 2 N D (3 D_h) + 2 N^2 D_h$: The macro-overhead for computing the dense $Q, K, V$ projections and the attention map $Q K^\top$ for a single head.
    \item $C_2 = 2 N D + 2 N^2$: The micro-cost of computing the attention output for a single surviving value dimension $j$.
    \item $C_3 = 4 N D$: The micro-cost of computing a single surviving intermediate neuron in the FFN block.
\end{itemize}

Because the FFN macro-gate $b_l$ strictly acts as an enabler for its internal neurons $c_{l,k}$, the FFN block possesses no empty structural overhead. Furthermore, we deliberately exclude the static computational overhead of unpruned layers (e.g., patch embedding, layer normalization, and the final classification head) from our differentiable cost function, as their derivatives with respect to the architecture gates are zero. 

Therefore, the total prunable computational cost of the search space $\mathcal{G}$ is linearly decomposed as:
\begin{equation}
\begin{aligned}
\mathbb{E}[C(\mathcal{G})] ={}&
\sum_{l=1}^{L} \sum_{h=1}^{H} \Big(
C_1 \cdot \mathbb{E}[g_{l,h}]
+ C_2 \sum_{j=1}^{D_h} \mathbb{E}[g_{l,h} \cdot d_{l,h,j}]
\Big) \\
&{}+ \sum_{l=1}^{L} \sum_{k=1}^{D_{\text{ffn}}}
C_3 \cdot \mathbb{E}[b_{l} \cdot c_{l,k}].
\end{aligned}
\label{eq:total_flops}
\end{equation}

This linear decomposition is crucial - it allows the optimization process to cleanly attribute hardware penalties to individual structures, explicitly penalizing the network for keeping empty attention heads open, while allowing FFN blocks to scale costs purely by their active neuron count.

\subsection{Training with Gumbel-Sigmoid}
\label{subsec:objective}

To train the binary gates, we use the Gumbel-Sigmoid relaxation. Each gate is parameterized by a learnable gate logit $\alpha$. It learns the structural importance of its corresponding architectural component. During the forward pass, we sample a continuous gate value $\hat{z} \in (0, 1)$ by adding logistic noise $\epsilon$ and applying a sigmoid with temperature $\tau$:
\begin{equation}
\hat{z} = \sigma\!\left(\frac{\alpha + \epsilon}{\tau}\right). \label{eq:gumbel}
\end{equation}

The total optimization objective during training is a combination of the primary task loss, cost-aware structural regularization, and a set of architectural feasibility penalties:
\begin{equation}
\mathcal{L}_{\text{total}} = \mathcal{L}_{\text{task}} + \lambda_{\text{macro}} \mathcal{L}_{\text{macro}} + \lambda_{\text{micro}} \mathcal{L}_{\text{micro}} + \mathcal{L}_{\text{feasibility}}.
\end{equation}

For the task loss, we use a combination of standard cross-entropy and Knowledge Distillation (KD). A pre-trained, dense teacher model provides soft targets, which are crucial for maintaining the representations of the student network as its capacity is dynamically pruned. 

Rather than using a global, squared-error budget target, we directly penalize the expected macro and micro computational costs derived in Equation~\ref{eq:total_flops}. We decouple these into $\mathcal{L}_{\text{macro}}$ (penalizing $C_1$) and $\mathcal{L}_{\text{micro}}$ (penalizing $C_2$ and $C_3$), controlled by independent hyperparameters $\lambda_{\text{macro}}$ and $\lambda_{\text{micro}}$. This decoupling allows us to explicitly manage the trade-off between coarse and fine-grained sparsity. 

A common failure mode in differentiable architecture search is structural collapse, where the network greedily prunes entire layers to minimize the cost penalty before the weights can adapt. To prevent this, we introduce a feasibility penalty $\mathcal{L}_{\text{feasibility}}$ that enforces minimum retention quotas using a ReLU threshold:
\begin{equation}
\mathcal{L}_{\text{feasibility}} = \beta_{\text{head}} \mathcal{L}_{f,\text{head}} + \beta_{\text{dim}} \mathcal{L}_{f,\text{dim}} + \beta_{\text{ffn}} \mathcal{L}_{f,\text{ffn}}.
\end{equation}
For example, $\mathcal{L}_{f,\text{head}} = \sum_{l} \text{ReLU}(k_{\text{min}} - \sum_{h} g_{l,h})^2$ penalizes layers whose active head count falls below a minimum $k_{\text{min}}$. Analogous terms apply a minimum-ratio threshold to the surviving attention dimensions ($\gamma_{\text{attn}}$) within each active head and to the FFN neurons ($\gamma_{\text{ffn}}$) within each active block, preventing structural collapse and keeping gradients well conditioned.

\subsection{Single-Phase End-to-End Discovery}
\label{subsec:single_phase}

A significant limitation of prior differentiable architecture search methods is their reliance on a computationally expensive two-phase pipeline: an initial search phase to discover the architecture mask, followed by a mandatory, isolated fine-tuning phase to recover the degraded weights.

HiAP eliminates this inefficiency by unifying search and training into a single, continuous phase. The dense ViT is trained end-to-end alongside the gate logits using $\mathcal{L}_{\text{total}}$. Throughout the training process, the Gumbel-Sigmoid temperature $\tau$ is gradually annealed from an initial value $\tau_0$ to a near-zero minimum $\tau_{\text{min}}$ (as illustrated in Figure~\ref{fig:gumbel_annealing}). The training process is summarized in Alg. \ref{alg:hiap}.

\begin{figure}[h]
\centering
\includegraphics[width=\linewidth]{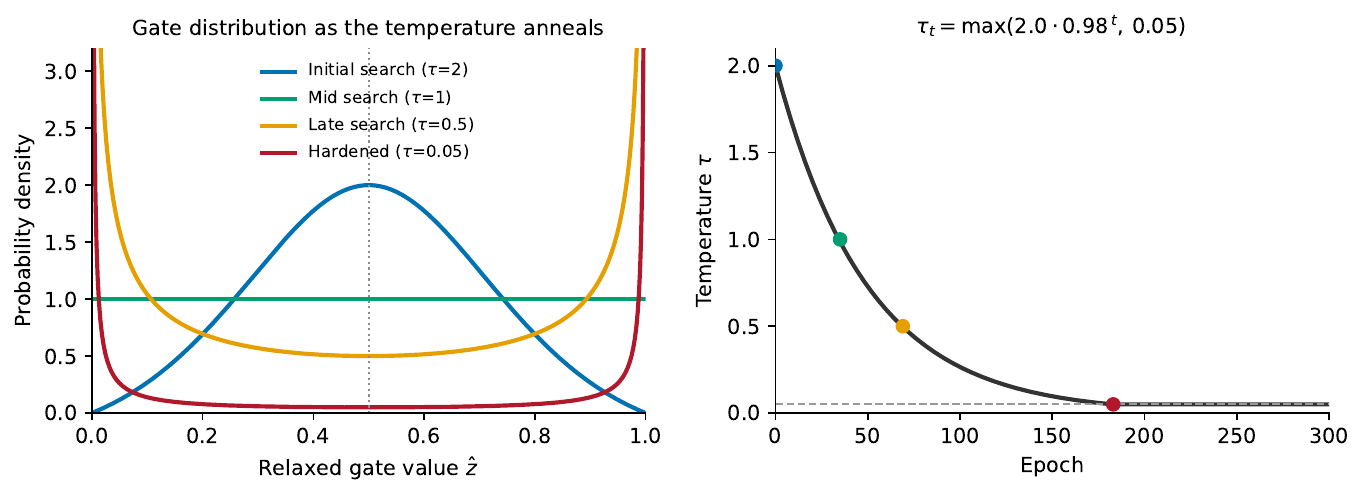} 
\caption{Gumbel-Sigmoid annealing under the schedule used in our experiments. Left: the
distribution of the relaxed gate value at four temperatures. Early in training the gates are soft and act as a continuous regularizer; as $\tau$ falls the mass moves to 0 and 1 and the network hardens into a discrete sub-architecture without a separate thresholding step. The two low-temperature curves continue above the plotted range near 0 and 1. 
Right: the temperature schedule, with each temperature marked at the epoch it is reached.}
\label{fig:gumbel_annealing}
\end{figure}

\begin{algorithm}[t]
\caption{HiAP single-phase auto-pruning. $\mathcal{G}$ is the gate set with logits $\alpha$ (head $g$, block $b$, value dimension $d$, neuron $c$); $\theta$ the network weights; $\mathcal{T}$ a pretrained teacher; $\tau$ the Gumbel-Sigmoid temperature annealed $\tau_0\!\to\!\tau_{\min}$; $\lambda_{\text{macro}}, \lambda_{\text{micro}}$ the budget weights; $\eta_\theta, \eta_\alpha$ the learning rates.}
\label{alg:hiap}
\begin{algorithmic}[1]
\State Initialize $\theta$ from a dense pretrained ViT; open all gates ($\alpha \gets \alpha_0 > 0$); precompute $C_1, C_2, C_3$ \Comment{Eq.~\ref{eq:total_flops}}
\For{epoch $e = 1, 2, \dots$}
    \State anneal temperature $\tau$ for epoch $e$
    \For{minibatch $(x, y)$}
        \State sample relaxed gates $\hat z = \sigma\!\big((\alpha + \epsilon)/\tau\big)$ and
        \Statex \hspace*{3em} run the forward pass \Comment{Eqs.~\ref{eq:gumbel},~\ref{eq:micro_attn},~\ref{eq:micro_ffn}}
        \State $\mathcal{L}_{\text{task}} \gets \mathrm{KD}(\text{student}, \mathcal{T}, y)$
        \State $\mathcal{L} \gets \mathcal{L}_{\text{task}} + \lambda_{\text{macro}}\mathcal{L}_{\text{macro}} + \lambda_{\text{micro}}\mathcal{L}_{\text{micro}} + \mathcal{L}_{\text{feas}}$
        \State update $\theta$ and $\alpha$ by gradient descent on $\mathcal{L}$
    \EndFor
\EndFor
\State harden gates by thresholding $\sigma(\alpha)$
\State delete pruned heads, blocks, head dimensions and neurons
\State \Return compact sub-network
\end{algorithmic}
\end{algorithm}

\textbf{Dynamic Co-Adaptation.} In the early epochs (high $\tau$), the gates behave as stochastic dropout mechanisms, continuously forcing the surviving network weights to learn robust, distributed representations. As training progresses and $\tau$ decays, the gate distributions sharpen, naturally converging toward deterministic binary decisions. Because the weights are co-adapted to this gradually hardening topology, the network seamlessly transitions into its final sparse state without the catastrophic "gradient shock" typically associated with sudden structural pruning.

\textbf{Physical Sub-network Extraction.} In prior works, sub-networks are often left with "soft" masks or require sparse convolution engines to realize speedups. In contrast, upon completion of the training cycle, HiAP physically extracts the discovered architecture. The stochastic gates are deterministically hardened using a simple probability threshold ($\hat{z} > 0.5$). The dimensions of head value matrices are physically truncated according to the micro-gates, and entirely pruned heads and FFN blocks are deleted. This yields a natively fast, physically compressed Vision Transformer ready for immediate inference without the need for secondary fine-tuning.

\section{Theoretical Remarks}

\begin{lemma}[Expressivity: strict superset]
Let $\mathcal{A}_{\text{head}}$ be the set of architectures reachable by head/block (macro) gates only, and let $\mathcal{A}_{\text{hiap}}$ be those additionally using micro-gates over attention dimensions or FFN neurons. If any layer has $D_h > 1$ or $D_{\text{ffn}} > 1$, then $\mathcal{A}_{\text{head}} \subsetneq \mathcal{A}_{\text{hiap}}$.
\end{lemma}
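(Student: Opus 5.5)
We identify an architecture with a hard gate assignment $\mathcal{G}$ (macro gates $g_{l,h}, b_l$ and micro gates $d_{l,h,j}, c_{l,k}$, all in $\{0,1\}$). The induced sub-network consists of the surviving heads and blocks together with their retained widths. The proof has two parts: the inclusion $\mathcal{A}_{\text{head}} \subseteq \mathcal{A}_{\text{hiap}}$, and a witness architecture in $\mathcal{A}_{\text{hiap}} \setminus \mathcal{A}_{\text{head}}$.

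\textbf{Inclusion.} Take any macro-only architecture, given by some choice of $g_{l,h}$ and $b_l$. Embed it in the HiAP search space by keeping those macro gates and setting every micro gate to one: $d_{l,h,j}=1$ for all $j$ and $c_{l,k}=1$ for all $k$. In Eqs.~\ref{eq:micro_attn} and~\ref{eq:micro_ffn} the factors $\odot d_{l,h}$ and $\odot c_l$ then act as the identity. So the forward map, and hence the extracted sub-network, coincides with the macro-only one. Every element of $\mathcal{A}_{\text{head}}$ is therefore reachable in $\mathcal{A}_{\text{hiap}}$.

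\textbf{Strictness.} The key is an invariant of $\mathcal{A}_{\text{head}}$. In every macro-only architecture, each surviving head has value width exactly $D_h$, and each surviving FFN block has hidden width exactly $D_{\text{ffn}}$. Widths therefore take values only in $\{0, D_h\}$ per head and $\{0, D_{\text{ffn}}\}$ per block. Now suppose some layer $l$ has $D_h>1$. Set $g_{l,h}=1$ and $d_{l,h,1}=1$, with all other $d_{l,h,j}=0$. The surviving head then has value width $1 \notin \{0,D_h\}$, which is impossible in $\mathcal{A}_{\text{head}}$. If instead $D_{\text{ffn}}>1$, set $b_l=1$, $c_{l,1}=1$ and $c_{l,k}=0$ for $k\ge 2$. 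This gives an active block of width $1\notin\{0,D_{\text{ffn}}\}$.

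A quick way to certify this distinction is through the cost model of Eq.~\ref{eq:total_flops}. For layer $l$, the head contributes $C_1 + C_2$, whereas a macro-only head contributes either $0$ or $C_1 + D_h C_2$. These differ because $C_2>0$ and $D_h>1$. The FFN case is analogous: an active block costs $C_3$ rather than $0$ or $D_{\text{ffn}}C_3$.

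\textbf{Main obstacle.} The delicate point is what ``architecture'' means, meaning whether two gate settings count as the same architecture. If architectures were compared only as functions, a width-1 head could in principle coincide with some full-width head for special weights. That would blur strictness. I would resolve this by defining architectures structurally, as the tuple of retained widths per head and block. This matches the paper's physical-extraction view, where truncated matrices have different shapes. Under this definition the width invariant above cleanly separates the two sets, and the per-layer MAC count gives an independent check.
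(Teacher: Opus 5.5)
Your proof is correct and follows the same route as the paper. The paper also exhibits an active head ($g_{l,h}=1$) with a partial mask $d_{l,h}\notin\{\mathbf{0},\mathbf{1}\}$ (your mask with only $d_{l,h,1}=1$ is a special case) and notes that whole-head toggling cannot produce it. Your explicit inclusion step via all-ones micro-gates, and your structural definition of ``architecture'', make precise what the paper's sketch leaves implicit.
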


\begin{proof}[Proof sketch]
Fix a layer/head with $D_h > 1$ (the FFN case is analogous). Pick an architecture that keeps a head ($g_{l,h} = 1$) but zeros a strict subset of its dimensions with $d_{l,h} \in \{0,1\}^{D_h}, \; d_{l,h} \notin \{\mathbf{0}, \mathbf{1}\}$. This cannot be realized by macro gates alone (which toggle whole heads), hence $\mathcal{A}_{\text{head}} \subsetneq \mathcal{A}_{\text{hiap}}$.
\end{proof}

\begin{proposition}[Budget linearity]
Under the accounting convention above, there exist nonnegative weights $\{w_i\}$ for each prunable unit such that the expected prunable cost decomposes linearly as $\mathbb{E}[C] = \sum_i w_i \, \mathbb{E}[z_i]$, where $z_i \in \{ g_{l,h}, \; g_{l,h} d_{l,h,j}, \; b_l c_{l,k} \}$. No independence assumptions are required.
\end{proposition}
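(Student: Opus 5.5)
The plan is to first write the realized (pathwise) prunable cost of a fixed binary gate configuration $\mathcal{G}$ as a linear combination of products of gates, then take expectations using only linearity of expectation. Concretely, I would use the accounting convention of Section~\ref{subsec:cost_model} to express, for every realization of the gates,
\begin{equation*}
C(\mathcal{G}) = \sum_{l=1}^{L}\sum_{h=1}^{H}\Big(C_1\, g_{l,h} + C_2\sum_{j=1}^{D_h} g_{l,h} d_{l,h,j}\Big) + \sum_{l=1}^{L}\sum_{k=1}^{D_{\text{ffn}}} C_3\, b_l c_{l,k}.
\end{equation*}
This identity has to be checked term by term against the convention. A head with $g_{l,h}=0$ is bypassed and contributes nothing. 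An active head pays the fixed $Q,K,V$ projection and $QK^\top$ overhead $C_1$, plus $C_2$ for each surviving value dimension, which accounts for the $V$-path projection, the attention-times-$V$ product and the output projection per dimension. An FFN block contributes $C_3$ per neuron exactly when both $b_l=1$ and $c_{l,k}=1$, and has no empty overhead, as stated in the text. Static, ungated costs are excluded by convention.

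Next I would define the prunable units $z_i$ as the three families $g_{l,h}$, $g_{l,h}d_{l,h,j}$ and $b_l c_{l,k}$, with weights $w_i \in \{C_1, C_2, C_3\}$ respectively. Nonnegativity is immediate, since $N, D, D_h \ge 1$ give $C_1 = 6ND D_h + 2N^2 D_h > 0$, $C_2 = 2ND + 2N^2 > 0$ and $C_3 = 4ND > 0$. Because $C(\mathcal{G}) = \sum_i w_i z_i$ holds pointwise, as a finite sum of random variables, linearity of expectation gives $\mathbb{E}[C] = \sum_i w_i\,\mathbb{E}[z_i]$. Linearity of expectation needs no independence, and that is the whole content of the final clause. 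The point is that the products $g_{l,h} d_{l,h,j}$ are themselves treated as the units, so we never need to factor $\mathbb{E}[g d] = \mathbb{E}[g]\mathbb{E}[d]$. I would add a remark that the same argument applies verbatim to the relaxed gates $\hat z \in (0,1)$, where the identity is taken as the definition of the surrogate cost.

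The only step with real content, and the one I expect to be the main obstacle, is the pathwise identity. It requires that the MAC count of an active head splits exactly into a dimension-independent part $C_1$ and a part linear in the number of kept value dimensions. If the $Q/K$ dimensions were also pruned, or the output projection were attributed differently, cross terms would appear. I would handle this by fixing the convention explicitly: only value-path dimensions are gated, as in Eq.~\ref{eq:micro_attn}, and the output projection rows for dimension $j$ are charged to $C_2$. Under that convention, each MAC of the gated portion is charged to exactly one unit $z_i$ that equals $1$ precisely when that MAC is executed.
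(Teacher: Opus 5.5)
Your proposal is correct and follows essentially the paper's route: write the realized prunable cost pathwise as a nonnegative combination of the joint gate products, then apply linearity of expectation with $g_{l,h}d_{l,h,j}$ and $b_l c_{l,k}$ treated as the units, so $\mathbb{E}[g_{l,h}d_{l,h,j}]$ is never factored and no independence is needed. Your version matches the statement more closely than the appendix, which omits the standalone $C_1\, g_{l,h}$ term and uses a single per-dimension weight $w_{\text{attn}}$; however, your term-by-term check has a small slip, since the $V$ projection already sits inside $C_1$ (the factor $3D_h$ in $2ND(3D_h)$), so $C_2 = 2ND + 2N^2$ covers only the attention-times-$V$ product and the output-projection row for dimension $j$.
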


\begin{proof}[Proof sketch]
Define $w_i$ as the marginal MACs saved by removing unit $i$ (see Appendix). By linearity of expectation, $\mathbb{E}[C] = \sum_i w_i \, \mathbb{E}[z_i]$ holds regardless of correlations among gates, since each term is a linear functional of the corresponding binary variable.
\end{proof}

\begin{proposition}[Soft-to-hard budget alignment]
Let $\tau \to 0$ anneal the Gumbel-Sigmoid gates to near-binary samples. Let $f(\theta)$ denote the realized cost after hardening via a threshold $\theta$ on gate probabilities. Because the differentiable expected cost $\mathbb{E}[C]$ converges to the discrete cost as variance collapses, using a fixed threshold $\theta = 0.5$ natively finds a sub-network such that $|f(0.5) - C_{\text{target}}| \leq \varepsilon$ for a small tolerance $\varepsilon > 0$, subject to the discrete step size of widths.
\end{proposition}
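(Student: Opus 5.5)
The argument has two parts. The first shows that the relaxed expected cost converges to the cost of the hardened sub-network as $\tau\to 0$. The second shows that, at convergence, the expected cost sits at the target up to a tolerance. Since $C_{\text{target}}$ is never defined as an explicit constraint (the budget enters only through $\lambda_{\text{macro}},\lambda_{\text{micro}}$), I take $C_{\text{target}}$ to be the expected cost $\mathbb{E}[C]$ attained at the converged logits $\alpha^\ast$. I assume the $\lambda$'s were tuned to hit the desired budget; that is the only reading under which the claim is non-vacuous.

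\textbf{Step 1: reduce to per-gate statements.} By the Budget linearity proposition, $\mathbb{E}[C]=\sum_i w_i\,\mathbb{E}[z_i]$ with $w_i\ge 0$. By the same decomposition, the hardened cost is $f(0.5)=\sum_i w_i\,\bar z_i$, where $\bar z_i$ is the product of the thresholded indicators $\mathbf{1}[\sigma(\alpha)>0.5]=\mathbf{1}[\alpha>0]$ for the gates making up $z_i$ (for example $\mathbf{1}[\alpha_{g}>0]\mathbf{1}[\alpha_{d}>0]$). Hence
\[
|f(0.5)-\mathbb{E}[C]|\le \sum_i w_i\,\bigl|\mathbb{E}[z_i]-\bar z_i\bigr|,
\]
and it suffices to control each summand.

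\textbf{Step 2: the $\tau\to0$ limit for a single gate.} For $\hat z=\sigma((\alpha+\epsilon)/\tau)$ with logistic $\epsilon$, we have $\hat z\to\mathbf{1}[\alpha+\epsilon>0]$ almost surely as $\tau\to 0$. By dominated convergence ($0\le\hat z\le1$), $\mathbb{E}[\hat z]\to P(\epsilon>-\alpha)=\sigma(\alpha)$. Products of gates converge the same way, with no independence needed since convergence is almost sure. So in the limit $\mathbb{E}[z_i]$ equals the probability that the Bernoulli product is on, and its variance $\sigma(\alpha)(1-\sigma(\alpha))$ is the residual gap. Thresholding at $0.5$ replaces $\sigma(\alpha)$ by $\mathbf{1}[\alpha>0]$, with error $\min(\sigma(\alpha),1-\sigma(\alpha))=\sigma(-|\alpha|)$. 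For a product of gates, each factor differs from its threshold by at most $\sigma(-|\alpha|)$ in absolute value, and all factors lie in $[0,1]$, so a telescoping bound gives error at most the sum of the per-factor $\sigma(-|\alpha|)$ terms.

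\textbf{Step 3 (main obstacle): the logits must be far from zero.} The bound $\sum_i w_i\sigma(-|\alpha_i|)$ is small only if the converged logits satisfy $|\alpha_i|\ge M$ for large $M$. This is exactly what the prose argument hides. I would argue it from the stationarity of $\mathcal{L}_{\text{total}}$ at small $\tau$. The gradient of $\mathbb{E}[\hat z]$ in $\alpha$ is the logistic density $\sigma(\alpha)(1-\sigma(\alpha))$, which is positive in the interior. The cost penalty pushes with constant slope $\lambda w_i$, while the task and feasibility gradients saturate. So a stationary point with $|\alpha|$ small requires an exact balance, which is non-generic. I would state this as an explicit assumption ("gates polarize: $\min_i|\alpha_i|\ge M$") rather than prove it. Under that assumption the error is $\varepsilon\le \bigl(\sum_i w_i\bigr)e^{-M}$ times a constant.

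The remaining error is discreteness. Each unit changes cost by $w_i\in\{C_1,C_2,C_3\}$, so even with exact polarization the attained cost can only hit $C_{\text{target}}$ up to $\max(C_1,C_2,C_3)$ (the "discrete step size of widths"). The final bound is therefore
\[
|f(0.5)-C_{\text{target}}|\le \max_k C_k + O\!\Bigl(e^{-M}\sum_i w_i\Bigr),
\]
which I would present as the precise content of the proposition.
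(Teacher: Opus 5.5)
Your proof has the same skeleton as the paper's sketch: send $\tau\to0$, conclude that the relaxed expected cost matches the hardened cost, and let $\varepsilon$ absorb width quantization. It is more careful at exactly the step the sketch glosses over.

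The paper asserts that the gate outputs converge to a Heaviside step ``around $0.5$'' and that this guarantees $\mathbb{E}[C]\approx f(0.5)$. Your Step 2 shows the step is in $\alpha+\epsilon$, not in $\alpha$. So $\tau\to0$ alone only yields $\mathbb{E}[\hat z]\to\sigma(\alpha)$, and the hardening error $\sigma(-|\alpha|)$ per factor is small only for polarized logits. Your linearity-plus-telescoping reduction then turns this into an explicit bound, which the paper never states.

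You can drop the stationarity heuristic in Step 3. It is shaky anyway: the penalty has constant slope in $\mathbb{E}[z_i]$, but in $\alpha$ its gradient (for a single-gate unit) is $\lambda w_i\sigma(\alpha)(1-\sigma(\alpha))$, which decays, so nothing there forces $|\alpha|\to\infty$. You do not need it, because the statement's clause ``as variance collapses'' is precisely your polarization hypothesis. The limiting variance $\sigma(\alpha)(1-\sigma(\alpha))$ vanishes iff $|\alpha|\to\infty$, so cite it as a hypothesis rather than leave it as an unproved step.

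If you want a finite-$\tau$ version (the paper stops at $\tau_{\min}=0.05$), note that $|\hat z-\mathbf{1}[\alpha+\epsilon>0]|=\sigma(-|\alpha+\epsilon|/\tau)$. Since the logistic density is at most $1/4$, this gives $\mathbb{E}|\hat z-\mathbf{1}[\alpha+\epsilon>0]|\le\frac{\ln 2}{2}\tau$ uniformly in $\alpha$.

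One inconsistency to tidy in your final display. With your own definition $C_{\text{target}}:=\mathbb{E}_{\alpha^\ast}[C]$, Steps 1--3 already give $|f(0.5)-C_{\text{target}}|\le 2e^{-M}\sum_i w_i$, so the $\max_k C_k$ term is superfluous. That term only enters if $C_{\text{target}}$ is an externally chosen budget. In that case ``the $\lambda$'s were tuned to hit it'' must be stated as an explicit assumption $|\mathbb{E}_{\alpha^\ast}[C]-C_{\text{target}}|\le\delta$, since nothing in your argument derives it. Quantization then explains why $\delta$ cannot in general be guaranteed below about half the largest gap between achievable discrete costs.
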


\begin{proof}[Proof sketch]
As $\tau \to 0$, the continuous gate outputs converge to the Heaviside step function around $0.5$. This guarantees the continuous expected cost directly approximates the hardened discrete cost. Discreteness yields a minimum granularity determined by head/neuronal counts; the tolerance $\varepsilon$ absorbs this quantization.
\end{proof}

\noindent \textbf{Annealing and hardening.} As $\tau \to 0$, the Gumbel-Sigmoid entropy decreases monotonically and samples concentrate on threshold decisions; a soft-to-hard schedule stabilizes learning, aligning with the single-phase discovery and direct hardening.

\section{Experiments}

We evaluate HiAP on Imagenet-1k. Our experiments demonstrate that HiAP can compress standard Vision Transformers to strict computational budgets in a single training phase, producing physically extractable subnetworks without the need for secondary fine-tuning.

\subsection{Experimental Setup}

\textbf{Datasets and Baselines.} We evaluate on ImageNet-1K, applying HiAP to DeiT-Base (17.6G MACs) and DeiT-Small (4.6G MACs).


\textbf{Implementation Details.} HiAP finds and trains the sub-network in one phase. Both models start from pre-trained DeiT weights and train for 300 epochs with AdamW, a global batch size of 256, and 5 warm-up epochs. Weights use a learning rate of $5\times10^{-5}$. The gate logits use a separate, larger rate of $5\times10^{-3}$. The temperature is annealed as $\tau_t = \max(2.0 \cdot 0.98^{t},\, 0.05)$. To align with previous methods, we use a dense, pre-trained RegNetY-160 as the teacher model during training. At the end of training the gates are hardened at $\hat{z} > 0.5$ and the sub-network is extracted.

\subsection{ImageNet-1K}

Tables~\ref{tab:base_results} and~\ref{tab:main_results} compare HiAP against published
ViT pruning methods at matched compute. Two versions of the baseline are frequently reported in the literature - distilled and non-distilled. All listed methods prune the non-distilled baseline model.

\begin{table}[h]
\centering
\caption{Comparison of HiAP against ViT pruning methods on DeiT-Base. Competitor deltas are computed against the dense baseline reported in each work; ours against our measured baseline of 81.71\% at 17.6G MACs. \footnotesize{\textit{$^*$VTP: Vision Transformer Pruning, $^{**}$Structured pruning: Salience-based Structured Pruning (SSP)}}}
\label{tab:base_results}
\resizebox{0.7\textwidth}{!}{%
\begin{tabular}{lcccc}
\toprule
\textbf{Method} & \textbf{MACs (G)} & \textbf{Baseline} & \textbf{Top-1 (\%)} & \textbf{vs.\ dense} \\
\midrule
$^*$VTP (20\%) & 13.8 & 81.80 & 81.30 & $-0.50$ \\
S$^2$ViTE-B \citep{chen2021chasing}  & 11.8 & 81.80 & 82.22 & $+0.42$ \\
$^{**}$SSP-B      & 11.8 & 81.80 & 80.08 & $-1.72$ \\
\midrule
SCOP \citep{tang2020scop}      & 10.2 & 81.80 & 79.70 & $-2.10$ \\
VTP (40\%) & 10.0 & 81.80 & 80.70 & $-1.10$ \\
WDPruning \citep{yu2022width} & 9.9  & 81.80 & 80.76 & $-1.04$ \\
\textbf{HiAP (Ours)} & \textbf{10.1} & \textbf{81.71} & \textbf{81.13} & \textbf{$-0.58$} \\
\midrule
SAViT-50 \citep{zheng2022savit}  & 8.8  & 81.84 & 82.54 & $+0.70$ \\
LPViT(1) \citep{xu2024lpvit}     & 8.8  & 81.80 & 80.81 & $-0.99$ \\
X-Pruner \citep{yu2023x}   & 8.5  & 81.80 & 81.02 & $-0.78$ \\
UVC \citep{yu2022unified}      & 8.0  & 81.80 & 80.57 & $-1.23$ \\
LPViT(2) \citep{xu2024lpvit}      & 7.9  & 81.80 & 80.55 & $-1.25$ \\
\textbf{HiAP (Ours)} & \textbf{7.4}  & \textbf{81.71} & \textbf{80.88} & \textbf{$-0.83$} \\
\bottomrule
\end{tabular}%
}
\end{table}

\begin{table}[h]
\centering
\caption{Comparison of HiAP against ViT pruning methods on DeiT-Small, with a dense baseline of 4.6G MACs.}
\label{tab:main_results}
\resizebox{0.7\columnwidth}{!}{%
\begin{tabular}{lcccc}
\toprule
\textbf{Method} & \textbf{MACs (G)} & \textbf{Baseline} & \textbf{Top-1 (\%)} & \textbf{vs.\ dense} \\
\midrule
WDPruning \citep{yu2022width} & 3.1 & 79.80 & 78.55 & $-1.25$ \\
S$^2$ViTE \citep{chen2021chasing}     & 3.1 & 79.80 & 79.22 & $-0.58$ \\
ViT-Slim  & 3.1 & 79.90 & 79.90 & $0.00$ \\
GOHSP \citep{yin2023gohsp}     & 3.0 & 79.90 & 79.98 & $+0.08$ \\
\textbf{HiAP (Ours)} & \textbf{3.1} & \textbf{79.75} & \textbf{79.33} & \textbf{$-0.42$} \\
\midrule
WDPruning \citep{yu2022width} & 2.6 & 79.80 & 78.38 & $-1.42$ \\
S$^2$ViTE \citep{chen2021chasing}    & 2.8 & 79.80 & 78.44 & $-1.36$ \\
ViT-Slim  & 2.8 & 79.90 & 79.50 & $-0.40$ \\
GOHSP \citep{yin2023gohsp}    & 2.8 & 79.90 & 79.86 & $-0.04$ \\
\textbf{HiAP (Ours)} & \textbf{2.9} & \textbf{79.75} & \textbf{78.64} & \textbf{$-1.11$} \\
\bottomrule
\end{tabular}%
}
\end{table}



\textbf{DeiT-Base.} At 10.1G MACs HiAP reaches 81.13\%, 0.58 points below our dense
baseline of 81.71\%. In the same band it improves on VTP by 0.43 points at equal compute, on WDPruning by 0.37, and on SCOP by 1.43 while using fewer MACs. Tightening the budget to 7.4G gives 80.88\%. That point uses 8\% less compute than UVC and 6\% less than LPViT while being 0.31 and 0.33 points more accurate, improving on both at once rather than trading one against the other. It also lands within 0.42 points of VTP's 13.8G result using 46\% less compute.

\textbf{DeiT-Small.} On the smaller model HiAP reaches 79.33\% at 3.1G, ahead of
WDPruning by 0.78 points and S$^2$ViTE by 0.11 at identical compute, and 78.64\% at 2.9G.
ViT-Slim and GOHSP stay ahead at this scale. Both search a supernet or solve for a
per-layer allocation before retraining, which is a materially different pipeline
(Appendix C). The gap also reflects how much less redundancy the smaller model carries. HiAP removes two thirds of the attention heads from DeiT-Base for a 0.58 point drop, but only about half from DeiT-Small for a comparable one. The method appears to scale better when there is more to find.


These comparisons are also worth reading alongside what each pipeline costs to run.
The Appendix summarizes the training complexity of each method. Of the four approaches with shorter pipelines than ours (WDPruning, X-Pruner, UVC and VTP), none exceeds our accuracy at matched compute. The two that do exceed it, SAViT and ViT-Slim, each add a separate importance-estimation or search stage on top of a full-length retraining run. HiAP occupies a different point - a single training phase, with two scalar penalty weights as the only controls, no importance signal, no solver, no ranking, and no separate fine-tuning pass. The accuracy this achieves is competitive within the band rather than the highest in it, and the resulting sub-network is dense by construction, which Section~\ref{subsec:inference} shows is where the practical difference lies.

\subsection{Hierarchical Search Dynamics and Structure Analysis}
\label{subsec:structure_analysis}


HiAP autonomously distributes sparsity across different architectural granularities. The final architecture is something we read off after training rather than something we specify before it. This lets us ask what the method does with a compute budget. Several observations are made as follows.

\textbf{Macro first, then micro, then weights.} We observe that the head and block gates move earliest, since closing a coarse structure gives the largest immediate drop in $\mathcal{L}_{\text{macro}}$, and they settle within roughly the first quarter of training. FFN widths keep adjusting well past that point. Then after a certain point the topology has stopped changing and the weights get updated for the remaining epochs. Accuracy corresponds to this change in architecture readily. Typically, it falls by around three points while the architecture is being decided, then climbs back over the long tail to finish at the numbers reported in Table~\ref{tab:base_results}. The run never restarts and we never specify where the search ends and recovery begins; the schedule allocates that itself. Methods that separate the two must choose the split in advance and pay for a full retraining run afterwards (Appendix C).



\textbf{What gets removed depends on model size.} At an identical 2:1 penalty ratio, HiAP reaches different structural solutions on the two models. On DeiT-Base it keeps every FFN block for the whole run and takes its savings from attention, retaining 49 of 144 heads. On DeiT-Small it chooses to close whole FFN blocks including the last one in every run, while keeping around half the heads. A head and an FFN neuron do not scale the same way with embedding dimension, so which one is the cheaper source of MACs changes as the mode becomes bigger. Methods that fix per-layer ratios, or that solve for an allocation offline, cannot achieve this, because for them the allocation is an input rather than a result.



\textbf{Dropping whole heads is favoured over thinning.} Removing a whole head saves much more than making a head narrower because a head that stays open pays $C_1$ for its $Q$ and $K$ projections and its attention map regardless of how many value dimensions survive. On DeiT-Base that fixed part is 72\% of what a full head costs. The optimizer takes the larger saving by removing heads entirely and leaving the survivors at full width. In every run we report, on both models, surviving heads keep all 64 value dimensions except the first layer or two, and it favours head and FFN removal. Raising $\lambda_{\text{micro}}$ relative to $\lambda_{\text{macro}}$ shifts this, and value dimensions then fall to roughly half width across the network (Sec.~\ref{subsec:ratios}), trading head count for head width at similar compute. The ratio decides which granularity does the work, not just how much is removed.

\textbf{The search is reversible.} Gate decisions are not final once made. For example, in the 1.5:1 configuration (Fig \ref{fig:evol_15_1}) an FFN block closes early in training and stays closed for around fifteen epochs before reopening and remaining open for the rest of the run. Because the gates stay differentiable throughout, a decision made while the weights are still adapting can be undone once they have moved. Methods that rank structures once and remove them have no equivalent mechanism, and an early misjudgment is permanent.

\subsection{Sensitivity to the Macro-Micro Penalty Ratio}
\label{subsec:ratios}

\begin{figure}[t]
    \centering
    \begin{subfigure}[t]{0.48\linewidth}\centering
        \includegraphics[width=\linewidth]{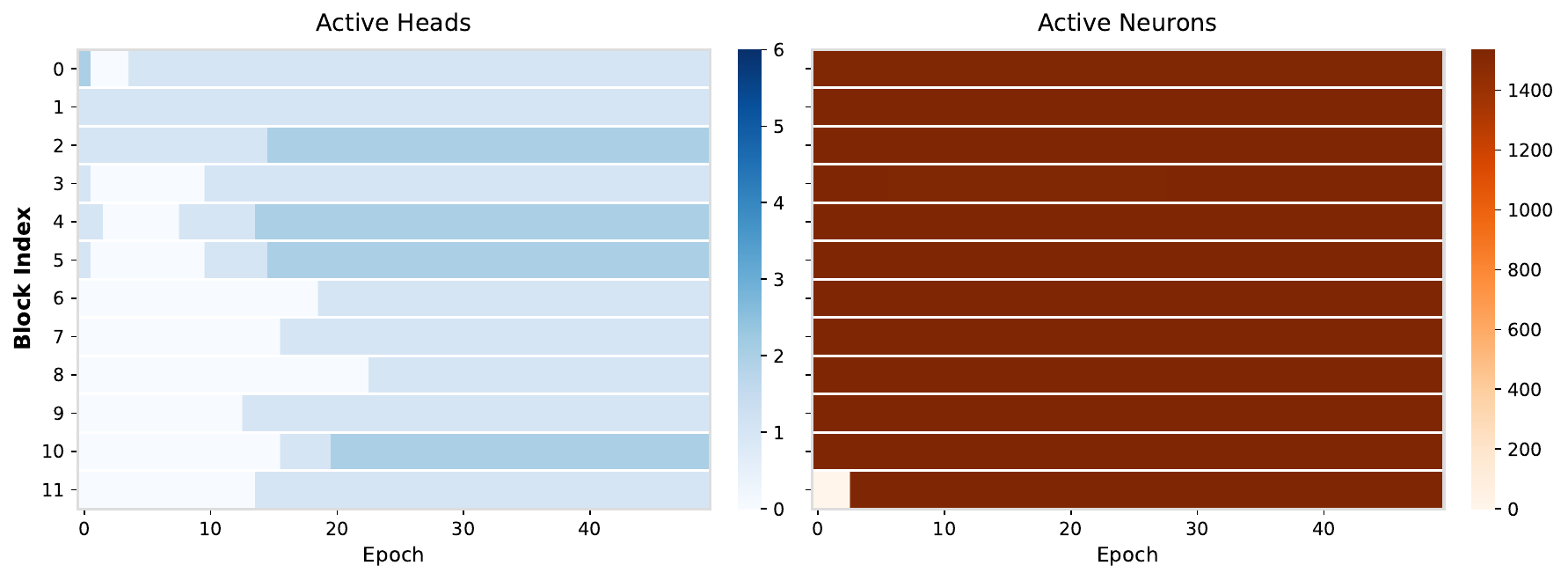}
        \caption{Macro-only: $\lambda_{\text{macro}}{=}1.5,\ \lambda_{\text{micro}}{=}0.0$}
        \label{fig:evol_macro}
    \end{subfigure}\hfill
    \begin{subfigure}[t]{0.48\linewidth}\centering
        \includegraphics[width=\linewidth]{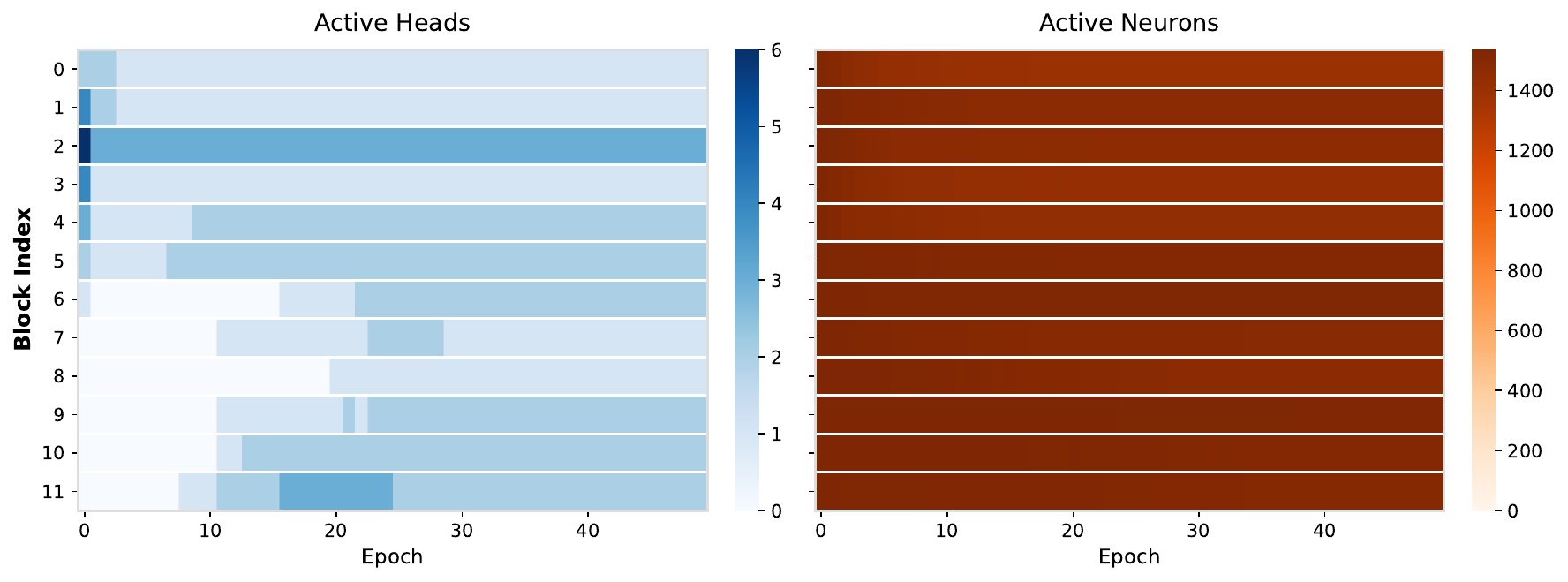}
        \caption{5:1: $\lambda_{\text{macro}}{=}1.0,\ \lambda_{\text{micro}}{=}0.2$}
        \label{fig:evol_5_1}
    \end{subfigure}

    \begin{subfigure}[t]{0.48\linewidth}\centering
        \includegraphics[width=\linewidth]{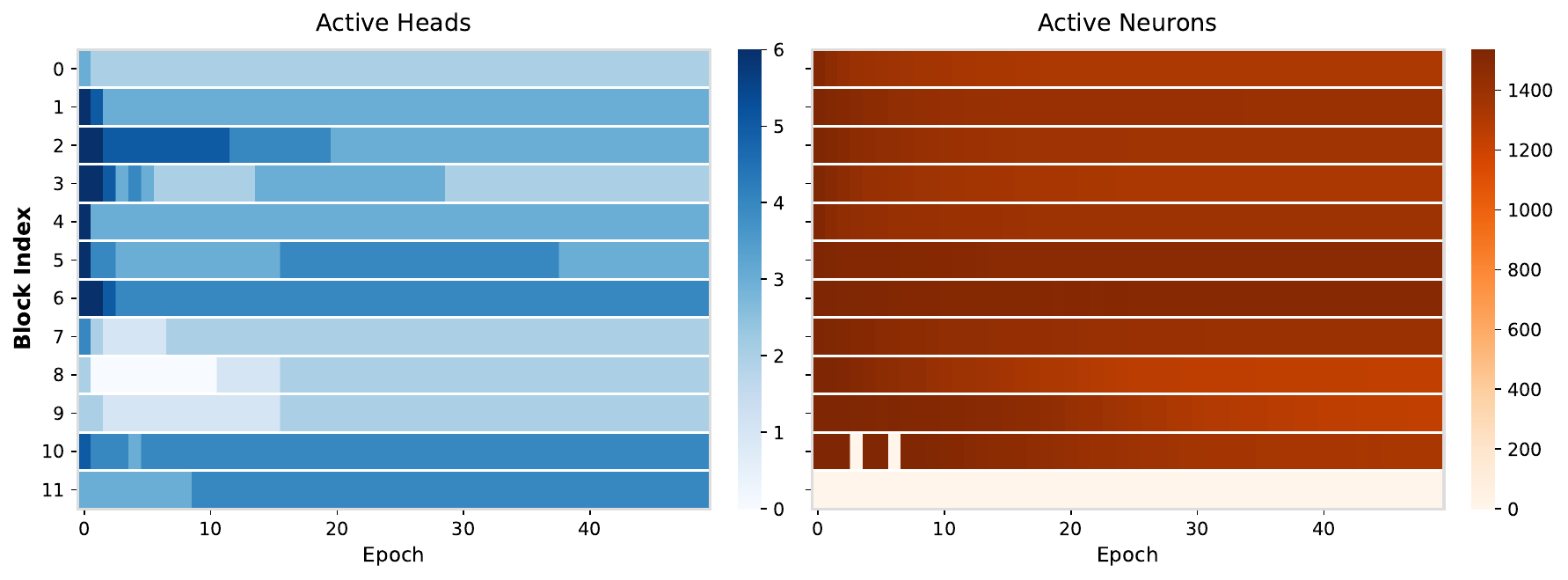}
        \caption{2:1: $\lambda_{\text{macro}}{=}0.9,\ \lambda_{\text{micro}}{=}0.45$}
        \label{fig:evol_2_1}
    \end{subfigure}\hfill
    \begin{subfigure}[t]{0.48\linewidth}\centering
        \includegraphics[width=\linewidth]{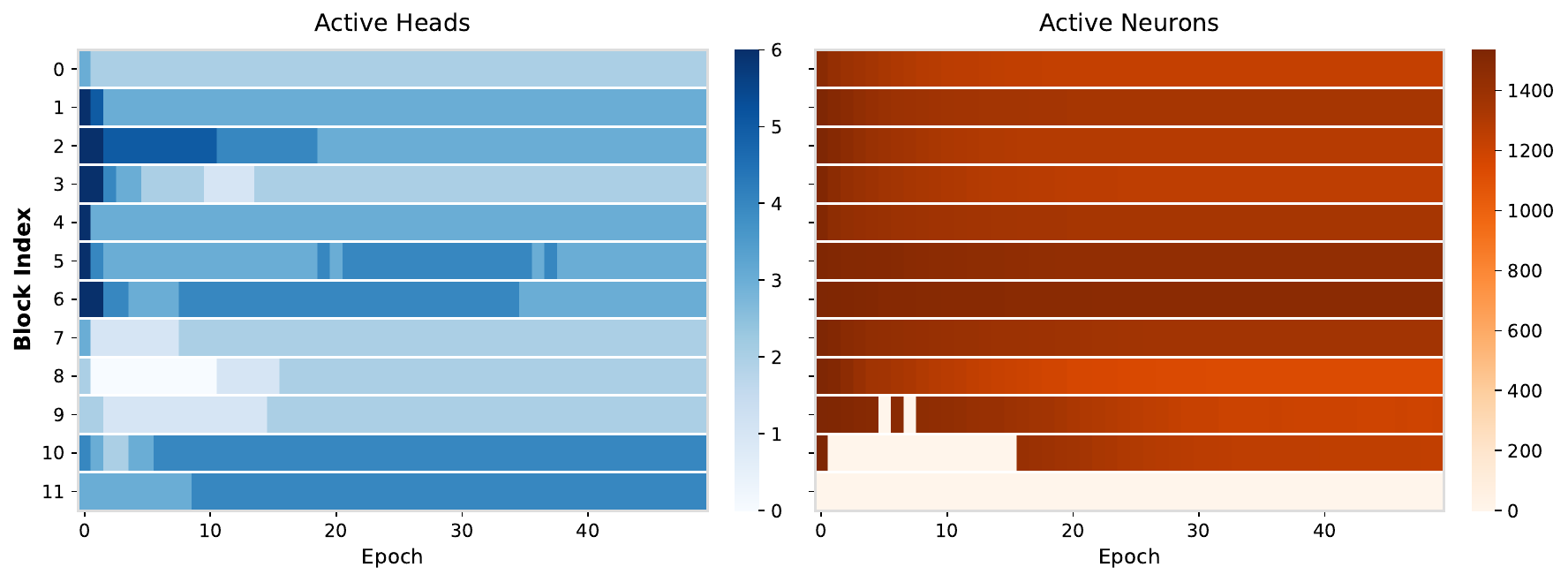}
        \caption{1.5:1: $\lambda_{\text{macro}}{=}1.2,\ \lambda_{\text{micro}}{=}0.8$}
        \label{fig:evol_15_1}
    \end{subfigure}

    \begin{subfigure}[t]{0.48\linewidth}\centering
        \includegraphics[width=\linewidth]{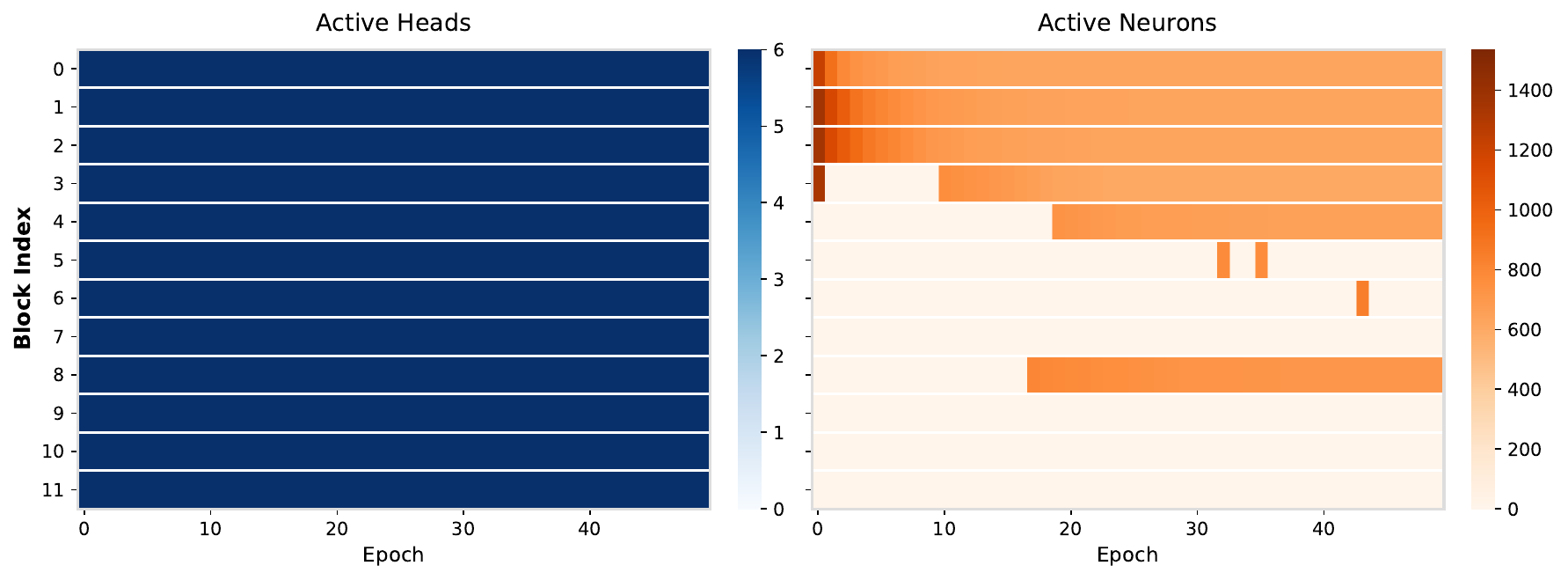}
        \caption{Micro-only: $\lambda_{\text{macro}}{=}0.0,\ \lambda_{\text{micro}}{=}1.5$}
        \label{fig:evol_micro}
    \end{subfigure}
    \caption{Architecture evolution under different macro:micro penalty ratios over the first 50 epochs (DeiT-Small, ImageNet). Each panel shows per-layer retention of attention heads and FFN neurons across training; darker is more retained, lighter more pruned. Configurations span macro-only (a) through micro-only (e).}
    \label{fig:lambda_evol}
\end{figure}

We further analyze the sensitivity of our decoupled cost penalty terms $\lambda_{\text{macro}}$ and $\lambda_{\text{micro}}$ and visualize the changes of the network's architecture during training. We evaluate a range of penalty ratios over the first 50 epochs on ImageNet using DeiT-Small, spanning macro-only, 5:1, 2:1, 1.5:1, and micro-only configurations. The resulting architecture trajectories are shown in Figure~\ref{fig:lambda_evol}(a)-(e), and their corresponding accuracy-versus-cost trade-offs can be found in the Appendix. The empirical data suggests that a balanced 2:1 ratio (Figure~\ref{fig:lambda_evol}(c)) offers the most favorable accuracy-cost balance for DeiT-Small, enforcing a stable, hierarchical pruning trajectory between the macro and micro structures.

With aggressive $\lambda_{\text{macro}}$ (Figure~\ref{fig:lambda_evol}(a)), the network shows a very high reduction in the remaining heads. Early in training, the optimizer bypasses several blocks entirely to force an immediate drop in computational cost. Because of the residual connections, the network maintains gradient flow and avoids complete accuracy degradation. The MLP blocks, however, retain nearly 100\% of their neurons, producing an inefficient and unbalanced distribution of the remaining MAC budget.

Interestingly, aggressive $\lambda_{\text{micro}}$ (Figure~\ref{fig:lambda_evol}(e)) not only directly reduces neurons but also indirectly eliminates entire MLP blocks. Without a macro penalty, the network retains all heads in every transformer layer. Early in training, many MLP blocks are effectively pruned, although some recover later in training. For the surviving MLP blocks, the neurons are significantly reduced to a fraction of their dense capacity.

For the less extreme but still skewed ratios, 5:1 (Figure~\ref{fig:lambda_evol}(b)) and 1.5:1 (Figure~\ref{fig:lambda_evol}(d)), we do not observe an overwhelming number of heads or neurons pruned. The 5:1 ratio prioritizes dropping attention heads while the weak micro penalty leaves the surviving neurons bloated. The 1.5:1 ratio yields a more balanced topology and competes with the 2:1 ratio.

In summary, these configurations form a loss-term ablation. Macro-only and micro-only each yield unbalanced allocations (bloated MLPs, or unstable indirect block elimination), whereas only the coupled penalty distributes sparsity cleanly across both granularities.

\subsection{Inference Speed of Extracted Sub-Networks}
\label{subsec:inference}

\begin{table}[h]
\centering
\caption{Measured inference speed of physically extracted HiAP sub-networks on a single NVIDIA A100 80GB, FP32, PyTorch 2.4. Latency is batch size 1, throughput is batch size 256, both averaged over 100 runs after 50 warm-up iterations. Baselines are the stock \texttt{timm} DeiT models, re-measured alongside every pruned model; the reported baseline is the mean, with all runs agreeing to within 0.3\%. Extracted widths are padded to the nearest multiple of 8 to ensure hardware kernel alignment, preserving
bit-identical outputs.}
\label{tab:latency}
\resizebox{0.7\columnwidth}{!}{%
\begin{tabular}{lccc}
\toprule
\textbf{Model} & \textbf{MACs (G)} & \textbf{Latency, ms ($\times$)} & \textbf{Throughput, img/s ($\times$)} \\
\midrule
DeiT-Base (dense)    & 17.6 & 4.69 (1.00) & 474.3 (1.00) \\
\textbf{HiAP (Ours)} & \textbf{10.1} & \textbf{3.30 (1.42)} & \textbf{793.8 (1.67)} \\
\textbf{HiAP (Ours)} & \textbf{7.4}  & \textbf{2.79 (1.68)} & \textbf{1054.8 (2.22)} \\
\midrule
DeiT-Small (dense)   & 4.6 & 3.04 (1.00) & 1656.7 (1.00) \\
\textbf{HiAP (Ours)} & \textbf{3.1} & \textbf{2.69 (1.13)} & \textbf{2248.1 (1.36)} \\
\textbf{HiAP (Ours)} & \textbf{2.9} & \textbf{2.59 (1.17)} & \textbf{2422.3 (1.46)} \\
\bottomrule
\end{tabular}%
}
\end{table}

\begin{table}[h]
\centering
\caption{How much of the theoretical compute reduction survives as measured speedup.
Each method is compared against its own dense baseline on its own hardware, so the
ratio is meaningful across papers even though absolute timings are not. SAViT is
measured on V100, NViT on V100, HiAP on A100.}
\label{tab:conversion}
\resizebox{0.7\columnwidth}{!}{%
\begin{tabular}{lccc}
\toprule
\textbf{Method} & \textbf{Compute reduction} & \textbf{Measured speedup} & \textbf{Realized} \\
\midrule
S$^2$ViTE   & $1.97\times$ & $1.05\times$ & 5\% \\
SAViT    & $3.32\times$ & $2.05\times$ & 62\% \\
NViT     & $2.60\times$ & $1.90\times$ & 73\% \\
SAViT    & $2.00\times$ & $1.55\times$ & 78\% \\
\midrule
\textbf{HiAP (Ours)} & $2.38\times$ & $2.22\times$ & \textbf{93\%} \\
\textbf{HiAP (Ours)} & $1.74\times$ & $1.67\times$ & \textbf{96\%} \\
\bottomrule
\end{tabular}
}
\end{table}

Theoretical MAC reductions are only useful if they materialize on hardware. Mask-based and unstructured methods generally cannot realize them without specialized sparse kernels, and methods that leave soft masks in place perform the full dense computation
regardless. HiAP hardens its gates and physically deletes the pruned structures, i.e., the dead attention heads are removed from the projection matrices to reduce the layer's head count, surviving value dimensions are truncated, dead FFN neurons are discarded, and layers with closed block gates are dropped entirely. What remains is an ordinary dense Vision Transformer with narrower layers, requiring no custom kernels and no fine-tuning after extraction. We verify the procedure by requiring the extracted network to reproduce the logits of the gated network on
identical inputs and accuracy is reproduced exactly.


Table~\ref{tab:latency} reports measured latency and throughput against stock dense baselines on a single A100. At 7.4G MACs, HiAP delivers a $2.22\times$ throughput gain over DeiT-Base and reduces single-image latency by 40\%, while retaining 80.88\% top-1; the DeiT-Small sub-networks reach $1.36\times$ and $1.46\times$ throughput.

It is more informative to measure how much of that theoretical compute reduction actually translates into real gain. Table~\ref{tab:conversion} compares the ratio of measured speedup to theoretical compute reduction across methods that have reported both. HiAP realizes over 90\% of its MAC reduction as wall-clock throughput. Methods that leave behind irregular structure or that rely on masks and sparse kernels, have less speedup realized. SAViT converts 78\% of its reduction at $2.0\times$ and only 62\% at $3.3\times$, and S$^2$ViTE converts a $1.97\times$ FLOPs reduction into a $1.05\times$ speedup. Absolute timings across papers are not comparable, since hardware and framework versions differ, but the ratio serves as a proxy for real inference realization because it is measured within each work against its own dense baseline.

\section{Conclusion}

HiAP turns ViT pruning from a pipeline of hand-tuned stages into a single budget-aware learning problem. Gumbel-Sigmoid gates at the macro level (heads, FFN blocks) and micro level (value dimensions, FFN neurons) are trained alongside the weights against an analytical MAC cost, so the network settles on its own per-layer allocation without preset sparsity ratios, importance rankings or a separate recovery phase. On ImageNet the resulting sub-networks are competitive with substantially more complex pipelines at
matched compute, and because they are physically extracted rather than masked, they run on stock kernels and convert more than 90\% of their theoretical saving into measured throughput.

Several limitations bound these results. Our micro-gates act on the value path only, so $C_1$, which covers the $Q$ and $K$ projections and the attention map, does not shrink when value dimensions are removed. Roughly 70\% of a head's compute is therefore fixed, and the value-path gates are correspondingly reluctant to engage; extending them to $Q$ and $K$ is the single largest improvement available to the method. The objective targets expected MACs rather than calibrated latency or energy, so realized speedup depends on the kernels involved, and we align extracted widths to multiples of eight for this reason. Finally, all reported results use logit distillation, in line with the methods we compare against,
and we do not isolate its contribution.

Natural extensions are to gate the $Q$ and $K$ projections, to replace the analytical cost with a platform-calibrated latency or energy signal, and to compose HiAP with token pruning, quantization and compiler-level optimization beyond classification.

\textbf{Acknowledgments}: Our work was supported by the UKRI EPSRC funding council (EP/V042270/1), a University of Aberdeen PhD studentship, the HPC facility ”Maxwell”, and in part by the University of Aberdeen Interdisciplinary Institute’s Alumni and Friends.

%% file: supp.tex
\newpage
\appendix
\section{Extended Ablation Studies}

\subsection{Early Performance at Various Macro-Micro Settings}
The decoupling of macro and micro-structural penalty is a core contribution of HiAP. To determine the optimal balance between coarse and fine-grained sparsity, we evaluated various penalty ratios over 50 epochs on ImageNet using DeiT-small. We tested macro-to-micro ratios of 2:1, 5:1, and 1.5:1, as well as macro-only and micro-only configurations. The Pareto frontier evaluating the accuracy versus computational cost for these configurations is illustrated in Figure \ref{fig:pareto}.

\begin{figure}[htbp]
    \centering
    \includegraphics[width=0.7\linewidth]{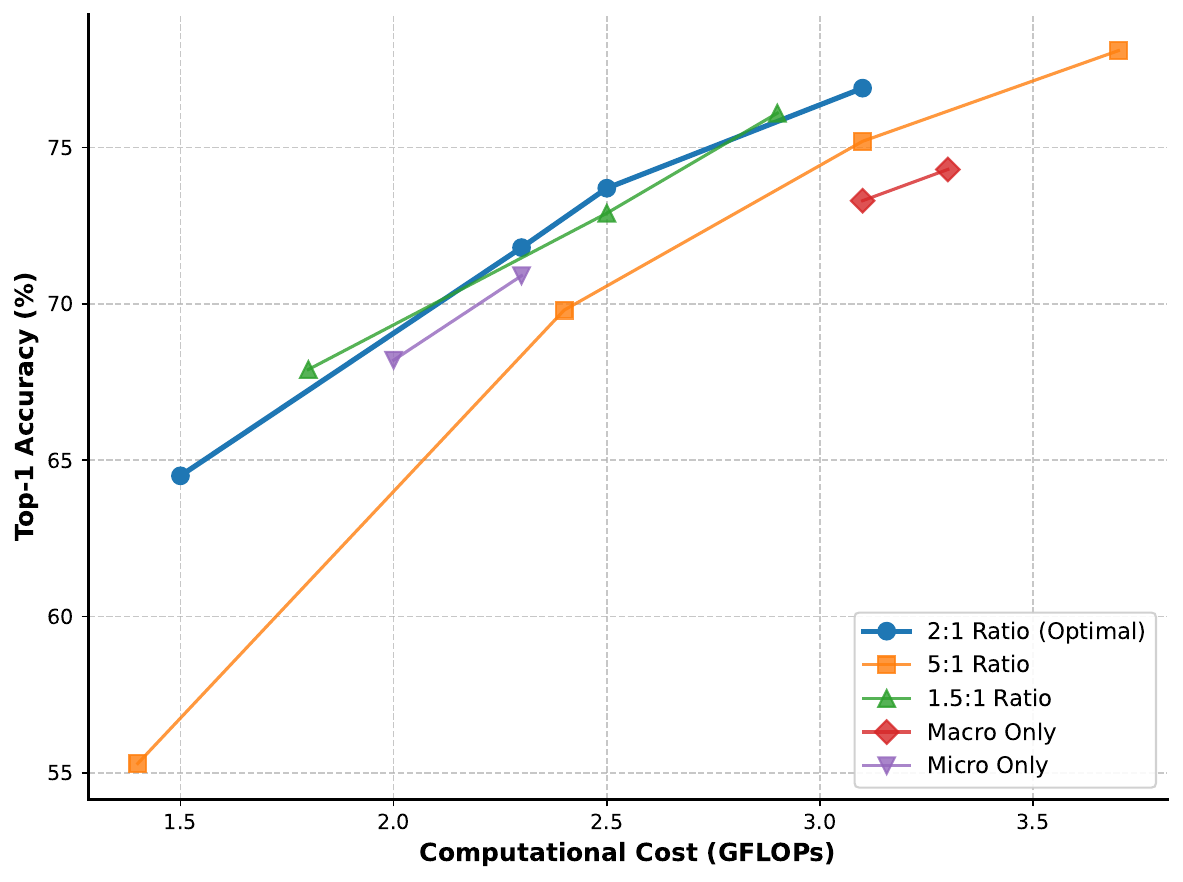}
    \caption{Top-1 Accuracy vs GMACs evaluated at early training stage across different penalty configurations}
    \label{fig:pareto}
\end{figure}

\section{Decomposition of Expected Prunable Cost (Proof of Proposition 1)}

To prove Proposition 1, we show that the expected prunable computational cost of the network can be expressed as an exact linear combination of the expected joint gate states, $\mathbb{E}[C] = \sum_i w_i \, \mathbb{E}[z_i]$. Let $C$ be the total prunable cost of a network. Then $C$ of a ViT of $L$ layers can be decoupled into Attention and FFN components.

\textbf{Attention Cost:} 
For a given layer $l$, the macro-gate $g_{l,h} \in \{0,1\}$ dictates the retention of an entire attention head $h$. 
The micro-gate $d_{l,h,j} \in \{0,1\}$ dictates the retention of the $j$-th intra-head dimension. Within a head, the dimensions are only computed if the parent head is active. Therefore, the join state variable is $z_{l,h,j}^{\text{attn}} = g_{l,h} d_{l,h,j}$. Because attention projection and dot-product MACs scale linearly with the inner dimensions, let $w_{\text{attn}}$ represent the deterministic marginal MACs required to compute a single attention dimension.

\textbf{FFN Cost:} Similarly, let $b_l \in \{0,1\}$ be the macro-gate for the entire FFN block, and $c_{l,k} \in \{0,1\}$ be the micro-gate for the $k$-th intermediate neuron. 
The neuron is only active if the block is active. Therefore, the joint variable becomes $z_{l,k}^{\text{ffn}} = b_l c_{l,k}$.

Let $w_{\text{ffn}}$ be the marginal MACs of a single neuron in MLP. The total discrete prunable cost of the sub-network is the strict sum of the active components:

\[
C =
\sum_{l=1}^{L}
\left(
\sum_{h=1}^{H}
\sum_{j=1}^{D_h}
w_{\text{attn}} (g_{l,h} d_{l,h,j})
+
\sum_{k=1}^{D_{\text{ffn}}}
w_{\text{ffn}} (b_l c_{l,k})
\right).
\]

By substituting the joint variables $z_i$, the total prunable cost simplifies to a single linear combination $C = \sum_i w_i z_i$. Taking the expected value gives $\mathbb{E}[C] = \mathbb{E}\left[\sum_i w_i z_i\right]$. By the linearity of expectation, the expected value of a sum equals the sum of the expected values, $\mathbb{E}[X+Y] = \mathbb{E}[X] + \mathbb{E}[Y]$.

This property holds universally for all random variables, regardless of whether they are independent. Therefore, despite the strict hierarchical correlation between the micro-gates and macro-gates, the expectation distributes directly through the summation, $\mathbb{E}[C] = \sum_i w_i \, \mathbb{E}[z_i]$. This confirms that the continuous expected budget constraint directly and linearly approximates the discrete cost without requiring independence assumptions.

\section{Training Complexity}

\begin{table}[h]
\centering
\caption{Pipeline structure of ViT pruning methods. Epoch counts are taken from each
work's paper or official implementation and cover the pruning pipeline only; all methods
except S$^2$ViTE start from a pre-trained DeiT, which S$^2$ViTE replaces with sparse training
from scratch. The rightmost column lists what each pipeline requires beyond ordinary
training.}
\label{tab:complexity}
\resizebox{\columnwidth}{!}{%
\begin{tabular}{lccl}
\toprule
\textbf{Method} & \textbf{Stages} & \textbf{Epochs} & \textbf{Requires} \\
\midrule
WDPruning & 1 & 100 & auxiliary shallow classifiers for depth decisions \\
X-Pruner  & 2 & $50 + 80$ & explainability-aware mask learning, then layer-wise threshold search \\
UVC       & 2 & $30 + 120$ & primal-dual solver over a budget constraint \\
VTP       & 3 & $100 + 100$ & $\ell_1$ sparsity training, manual thresholding, then fine-tuning \\
LPViT     & 2 & $\text{calib.} + 300$ & calibration-set distortion sampling, block-sparse masks \\
SAViTE     & 2 & $2 + 300$ & Hessian- and Taylor-based joint importance estimation \\
ViT-Slim  & 3 & $50 + 300$ & soft-mask supernet search and ranking \\
S$^2$ViTE    & 1 & 600 & periodic prune-and-grow schedule \\
\midrule
\textbf{HiAP (Ours)} & \textbf{1} & \textbf{300} & \textbf{--} \\
\bottomrule
\end{tabular}%
}
\end{table}

\section{Layer Sparsity Allocation}
\begin{table}[h]
\centering
\caption{Discovered topology on DeiT-Base. Heads are out of 12 per layer, FFN neurons out
of 3072. Both architectures come from the same objective with only the penalty weights
changed. Every FFN block survives; compression comes from head removal and FFN width, and
the early layers are cut hardest on both.}
\label{tab:topology}
\resizebox{\columnwidth}{!}{%
\begin{tabular}{lcccccccccccc}
\toprule
\textbf{Layer} & 0 & 1 & 2 & 3 & 4 & 5 & 6 & 7 & 8 & 9 & 10 & 11 \\
\midrule
\multicolumn{13}{l}{\textit{10.1G MACs, 81.13\%}} \\
Heads & 2 & 1 & 1 & 3 & 3 & 5 & 5 & 6 & 5 & 4 & 6 & 8 \\
FFN   & 1221 & 770 & 1417 & 1968 & 2240 & 2230 & 2476 & 2798 & 2947 & 3007 & 2784 & 1991 \\
\midrule
\multicolumn{13}{l}{\textit{7.4G MACs, 80.88\%}} \\
Heads & 1 & 1 & 1 & 2 & 3 & 2 & 2 & 3 & 1 & 2 & 3 & 5 \\
FFN   & 1105 & 770 & 1134 & 1511 & 1769 & 1756 & 1973 & 2357 & 2543 & 2416 & 1799 & 1298 \\
\bottomrule
\end{tabular}%
}
\end{table}

%% file: main.bib
@String(CVPR  = {IEEE Conf. Comput. Vis. Pattern Recog.})

@String(AAAI  = {AAAI})

@String(ICASSP=	{ICASSP})

@String(CVPR  = {CVPR})

@article{tang2020scop,
  title={Scop: Scientific control for reliable neural network pruning},
  author={Tang, Yehui and Wang, Yunhe and Xu, Yixing and Tao, Dacheng and Xu, Chunjing and Xu, Chao and Xu, Chang},
  journal={Advances in neural information processing systems},
  volume={33},
  pages={10936--10947},
  year={2020}
}

@article{michel2019sixteen,
  title={Are sixteen heads really better than one?},
  author={Michel, Paul and Levy, Omer and Neubig, Graham},
  journal={Advances in neural information processing systems},
  volume={32},
  year={2019}
}

@inproceedings{kong2023peeling,
  title={Peeling the onion: Hierarchical reduction of data redundancy for efficient vision transformer training},
  author={Kong, Zhenglun and Ma, Haoyu and Yuan, Geng and Sun, Mengshu and Xie, Yanyue and Dong, Peiyan and Meng, Xin and Shen, Xuan and Tang, Hao and Qin, Minghai and others},
  booktitle={Proceedings of the AAAI Conference on Artificial Intelligence},
  volume={37},
  pages={8360--8368},
  year={2023}
}

@inproceedings{molchanov2019importance,
  title={Importance estimation for neural network pruning},
  author={Molchanov, Pavlo and Mallya, Arun and Tyree, Stephen and Frosio, Iuri and Kautz, Jan},
  booktitle={Proceedings of the IEEE/CVF conference on computer vision and pattern recognition},
  pages={11264--11272},
  year={2019}
}

@article{chen2021chasing,
  title={Chasing sparsity in vision transformers: An end-to-end exploration},
  author={Chen, Tianlong and Cheng, Yu and Gan, Zhe and Yuan, Lu and Zhang, Lei and Wang, Zhangyang},
  journal={Advances in Neural Information Processing Systems},
  volume={34},
  pages={19974--19988},
  year={2021}
}

@inproceedings{yang2023global,
  title={Global vision transformer pruning with hessian-aware saliency},
  author={Yang, Huanrui and Yin, Hongxu and Shen, Maying and Molchanov, Pavlo and Li, Hai and Kautz, Jan},
  booktitle={Proceedings of the IEEE/CVF conference on computer vision and pattern recognition},
  pages={18547--18557},
  year={2023}
}

@inproceedings{yin2023gohsp,
  title={Gohsp: A unified framework of graph and optimization-based heterogeneous structured pruning for vision transformer},
  author={Yin, Miao and Uzkent, Burak and Shen, Yilin and Jin, Hongxia and Yuan, Bo},
  booktitle={Proceedings of the AAAI Conference on Artificial Intelligence},
  volume={37},
  pages={10954--10962},
  year={2023}
}

@article{yu2019autoslim,
  title={Autoslim: Towards one-shot architecture search for channel numbers},
  author={Yu, Jiahui and Huang, Thomas},
  journal={arXiv preprint arXiv:1903.11728},
  year={2019}
}

@article{rao2021dynamicvit,
  title={Dynamicvit: Efficient vision transformers with dynamic token sparsification},
  author={Rao, Yongming and Zhao, Wenliang and Liu, Benlin and Lu, Jiwen and Zhou, Jie and Hsieh, Cho-Jui},
  journal={Advances in neural information processing systems},
  volume={34},
  pages={13937--13949},
  year={2021}
}

@article{liang2022not,
  title={Not all patches are what you need: Expediting vision transformers via token reorganizations},
  author={Liang, Youwei and Ge, Chongjian and Tong, Zhan and Song, Yibing and Wang, Jue and Xie, Pengtao},
  journal={arXiv preprint arXiv:2202.07800},
  year={2022}
}

@inproceedings{yin2022vit,
  title={A-vit: Adaptive tokens for efficient vision transformer},
  author={Yin, Hongxu and Vahdat, Arash and Alvarez, Jose M and Mallya, Arun and Kautz, Jan and Molchanov, Pavlo},
  booktitle={Proceedings of the IEEE/CVF conference on computer vision and pattern recognition},
  pages={10809--10818},
  year={2022}
}

@article{bolya2022token,
  title={Token merging: Your vit but faster},
  author={Bolya, Daniel and Fu, Cheng-Yang and Dai, Xiaoliang and Zhang, Peizhao and Feichtenhofer, Christoph and Hoffman, Judy},
  journal={arXiv preprint arXiv:2210.09461},
  year={2022}
}

@article{bengio2015conditional,
  title={Conditional computation in neural networks for faster models},
  author={Bengio, Emmanuel and Bacon, Pierre-Luc and Pineau, Joelle and Precup, Doina},
  journal={arXiv preprint arXiv:1511.06297},
  year={2015}
}

@article{louizos2017learning,
  title={Learning sparse neural networks through $ L\_0 $ regularization},
  author={Louizos, Christos and Welling, Max and Kingma, Diederik P},
  journal={arXiv preprint arXiv:1712.01312},
  year={2017}
}

@inproceedings{yu2023x,
  title={X-pruner: explainable pruning for vision transformers},
  author={Yu, Lu and Xiang, Wei},
  booktitle={Proceedings of the IEEE/CVF conference on computer vision and pattern recognition},
  pages={24355--24363},
  year={2023}
}

@article{cai2018proxylessnas,
  title={Proxylessnas: Direct neural architecture search on target task and hardware},
  author={Cai, Han and Zhu, Ligeng and Han, Song},
  journal={arXiv preprint arXiv:1812.00332},
  year={2018}
}

@inproceedings{kong2022spvit,
  title={Spvit: Enabling faster vision transformers via latency-aware soft token pruning},
  author={Kong, Zhenglun and Dong, Peiyan and Ma, Xiaolong and Meng, Xin and Niu, Wei and Sun, Mengshu and Shen, Xuan and Yuan, Geng and Ren, Bin and Tang, Hao and others},
  booktitle={European conference on computer vision},
  pages={620--640},
  year={2022},
  organization={Springer}
}

@article{dosovitskiy2020image,
  title={An image is worth 16x16 words: Transformers for image recognition at scale},
  author={Dosovitskiy, Alexey and Beyer, Lucas and Kolesnikov, Alexander and Weissenborn, Dirk and Zhai, Xiaohua and Unterthiner, Thomas and Dehghani, Mostafa and Minderer, Matthias and Heigold, Georg and Gelly, Sylvain and others},
  journal={arXiv preprint arXiv:2010.11929},
  year={2020}
}

@inproceedings{chavan2022vision,
  title={Vision transformer slimming: Multi-dimension searching in continuous optimization space},
  author={Chavan, Arnav and Shen, Zhiqiang and Liu, Zhuang and Liu, Zechun and Cheng, Kwang-Ting and Xing, Eric P},
  booktitle={Proceedings of the IEEE/CVF conference on computer vision and pattern recognition},
  pages={4931--4941},
  year={2022}
}

@article{han2015deep,
  title={Deep compression: Compressing deep neural networks with pruning, trained quantization and huffman coding},
  author={Han, Song and Mao, Huizi and Dally, William J},
  journal={arXiv preprint arXiv:1510.00149},
  year={2015}
}

@article{dao2022flashattention,
  title={Flashattention: Fast and memory-efficient exact attention with io-awareness},
  author={Dao, Tri and Fu, Dan and Ermon, Stefano and Rudra, Atri and R{\'e}, Christopher},
  journal={Advances in neural information processing systems},
  volume={35},
  pages={16344--16359},
  year={2022}
}

@inproceedings{liu2024updp,
  title={Updp: A unified progressive depth pruner for cnn and vision transformer},
  author={Liu, Ji and Tang, Dehua and Huang, Yuanxian and Zhang, Li and Zeng, Xiaocheng and Li, Dong and Lu, Mingjie and Peng, Jinzhang and Wang, Yu and Jiang, Fan and others},
  booktitle={Proceedings of the AAAI conference on artificial intelligence},
  volume={38},
  pages={13891--13899},
  year={2024}
}

@article{zheng2022savit,
  title={Savit: Structure-aware vision transformer pruning via collaborative optimization},
  author={Zheng, Chuanyang and Zhang, Kai and Yang, Zhi and Tan, Wenming and Xiao, Jun and Ren, Ye and Pu, Shiliang and others},
  journal={Advances in Neural Information Processing Systems},
  volume={35},
  pages={9010--9023},
  year={2022}
}

@inproceedings{xu2024lpvit,
  title={Lpvit: Low-power semi-structured pruning for vision transformers},
  author={Xu, Kaixin and Wang, Zhe and Chen, Chunyun and Geng, Xue and Lin, Jie and Yang, Xulei and Wu, Min and Li, Xiaoli and Lin, Weisi},
  booktitle={European Conference on Computer Vision},
  pages={269--287},
  year={2024},
  organization={Springer}
}

@article{sun2025mdp,
  title={MDP: Multidimensional Vision Model Pruning with Latency Constraint},
  author={Sun, Xinglong and Lakshmanan, Barath and Shen, Maying and Lan, Shiyi and Chen, Jingde and Alvarez, Jose M},
  journal={arXiv preprint arXiv:2504.02168},
  year={2025}
}

@inproceedings{fang2024isomorphic,
  title={Isomorphic pruning for vision models},
  author={Fang, Gongfan and Ma, Xinyin and Mi, Michael Bi and Wang, Xinchao},
  booktitle={European Conference on Computer Vision},
  pages={232--250},
  year={2024},
  organization={Springer}
}

@ARTICLE{8485719,
  author={Chen, Shi and Zhao, Qi},
  journal={IEEE Transactions on Pattern Analysis and Machine Intelligence}, 
  title={Shallowing Deep Networks: Layer-Wise Pruning Based on Feature Representations}, 
  year={2019},
  volume={41},
  number={12},
  pages={3048-3056},
  doi={10.1109/TPAMI.2018.2874634}}

@article{wang2019dbp,
  title={Dbp: Discrimination based block-level pruning for deep model acceleration},
  author={Wang, Wenxiao and Zhao, Shuai and Chen, Minghao and Hu, Jinming and Cai, Deng and Liu, Haifeng},
  journal={arXiv preprint arXiv:1912.10178},
  year={2019}
}

@article{xu2020layer,
  title={Layer pruning via fusible residual convolutional block for deep neural networks},
  author={Xu, Pengtao and Cao, Jian and Shang, Fanhua and Sun, Wenyu and Li, Pu},
  journal={arXiv preprint arXiv:2011.14356},
  year={2020}
}

@article{hassibi1992second,
  title={Second order derivatives for network pruning: Optimal brain surgeon},
  author={Hassibi, Babak and Stork, David},
  journal={Advances in neural information processing systems},
  volume={5},
  year={1992}
}

@article{zhu2021vision,
  title={Vision transformer pruning},
  author={Zhu, Mingjian and Tang, Yehui and Han, Kai},
  journal={arXiv preprint arXiv:2104.08500},
  year={2021}
}

@article{yu2022unified,
  title={Unified visual transformer compression},
  author={Yu, Shixing and Chen, Tianlong and Shen, Jiayi and Yuan, Huan and Tan, Jianchao and Yang, Sen and Liu, Ji and Wang, Zhangyang},
  journal={International Conference on Learning Representations},
  year={2022}
}

@InProceedings{Fang_2023_CVPR,
    author    = {Fang, Gongfan and Ma, Xinyin and Song, Mingli and Mi, Michael Bi and Wang, Xinchao},
    title     = {DepGraph: Towards Any Structural Pruning},
    booktitle = {Proceedings of the IEEE/CVF Conference on Computer Vision and Pattern Recognition (CVPR)},
    month     = {June},
    year      = {2023},
    pages     = {16091-16101}
}

@inproceedings{tang2023sr,
  title={Sr-init: An interpretable layer pruning method},
  author={Tang, Hui and Lu, Yao and Xuan, Qi},
  booktitle={ICASSP 2023-2023 IEEE International Conference on Acoustics, Speech and Signal Processing (ICASSP)},
  pages={1--5},
  year={2023},
  organization={IEEE}
}

@article{fan2019reducing,
  title={Reducing transformer depth on demand with structured dropout},
  author={Fan, Angela and Grave, Edouard and Joulin, Armand},
  journal={arXiv preprint arXiv:1909.11556},
  year={2019}
}

@inproceedings{yu2022width,
  title={Width \& depth pruning for vision transformers},
  author={Yu, Fang and Huang, Kun and Wang, Meng and Cheng, Yuan and Chu, Wei and Cui, Li},
  booktitle={Proceedings of the AAAI conference on artificial intelligence},
  volume={36},
  pages={3143--3151},
  year={2022}
}
